\documentclass[letterpaper]{article} % DO NOT CHANGE THIS
\usepackage[preprint]{aaai2027}  % DO NOT CHANGE THIS
\usepackage[hyphens]{url}  % DO NOT CHANGE THIS
\usepackage{graphicx} % DO NOT CHANGE THIS
\usepackage{natbib}  % DO NOT CHANGE THIS AND DO NOT ADD ANY OPTIONS TO IT
\usepackage{caption} % DO NOT CHANGE THIS AND DO NOT ADD ANY OPTIONS TO IT
\usepackage{algorithm}
\usepackage{algorithmic}
\usepackage{todonotes}
\usepackage{amsmath}
\usepackage{amssymb}
\usepackage{amsthm}
\usepackage{float}

\usepackage{xcolor}

\newtheorem{property}{Property}

\newtheorem{proposition}{Proposition}

\usepackage{newfloat}
\usepackage{listings}
\DeclareCaptionStyle{ruled}{labelfont=normalfont,labelsep=colon,strut=off} % DO NOT CHANGE THIS
\floatstyle{ruled}
\newfloat{listing}{tb}{lst}{}
\floatname{listing}{Listing}

\usepackage{booktabs}

\title{%In defence of 
A Theory of Post-hoc Debate Judgement}
\author{
    Xiang Yin\textsuperscript{\rm 1}\equalcontrib,
    Adam Dejl\textsuperscript{\rm 1}\equalcontrib,
    Antonio Rago\textsuperscript{\rm 2},
    Lihu Chen\textsuperscript{\rm 1},
    Francesca Toni\textsuperscript{\rm 1}
}
\affiliations{
    \textsuperscript{\rm 1}Imperial College London, UK\\
    \textsuperscript{\rm 2}King's College London, UK\\
    \{x.yin20, lihu.chen, f.toni\}@imperial.ac.uk, antonio.rago@kcl.ac.uk
}

\newcommand{\ft}[1]{\textcolor{black}{#1}} 
 
\newcommand{\ad}[1]{\textcolor{black}{#1}} 
\begin{document}

\maketitle

\begin{abstract}
Debates have recently emerged as a useful methodology for agentic AI to improve performance as well as to aid explainability% and user engagement
.
%, including \AR{via} contestability of \delete{predictions}\AR{models' outputs}
%For example, LLM-empowered agents may debate internally (with themselves) and/or externally (with other agents). 
In many settings where debates are used, debates' outcomes and resulting outputs are determined post-hoc by external \emph{judges}, often LLMs. In this paper we develop and test a novel theory of \emph{debate judgement} applicable to all settings where agents engage in debates by providing \emph{pros and cons} for their opinions therein. Specifically, we identify a number of formal properties that debate judgement may be required to satisfy in general, as concerns reproducibility, robustness, groundedness and explainability. Then, we explore their satisfaction %formally and/or 
experimentally, for claim verification settings, for two specific alternative debate judgement methods: variants of the \emph{LLMs as a judge} idea and formal \emph{semantics} drawn from computational argumentation. We show that the two methods give similar accuracy performances but the
%former may lack formal guarantees that the latter brings
\ft{latter shows better property satisfaction}. Overall, our study indicates argumentation semantics as an ideal candidate for principled judges in debate-driven AI.

%TO reDO ...... Debates have recently emerged as a useful  methodology in a number of deep learning settings, to improve performance as well as to aid explainability and user engagement, including \AR{via} contestability of \delete{predictions}\AR{models' outputs}. For example, LLM-empowered agents may debate internally and externally, with other agents, to improve their answers.\todo{check/add} In most settings where debates are used, their outcomes and  resulting predictions  are predominantly determined by external \emph{judges} (e.g. LLMs).  In this paper we argue that, for LLM-empowered agents\todo{whose opinions are individual predictions supported by pro-con explanations?}, if these judges are non-deterministic and lack formal guarantees, they cannot serve as faithful explanations for contestable\todo{AR: I think the contestable thing makes it weaker than focusing on accuracy also, but if you want to focus on it, maybe it should be in the title? WE DO NOT DO ACCURACY AT ALL, SO NOT SURE} predictions. In this context, we advocate the use of \emph{formal semantics} drawn from computational argumentation as principled judges for debate-driven explainable and contestable AI. 
\end{abstract}

% Uncomment the following to link to your code, datasets, an extended version or similar.
% You must keep this block between (not within) the abstract and the main body of the paper.
% Make sure that you do not de-anonymize yourself with these links.
% \begin{links}
%     \link{Code}{https://aaai.org/example/code}
%     \link{Datasets}{https://aaai.org/example/datasets}
%     \link{Extended version}{https://aaai.org/example/extended-version}
% \end{links}
\section{Introduction}

Debates have recently emerged as a useful methodology for agentic AI to improve performance~\citep{debate-safety2018,tillmann2025literature, debateLLMs-2024,li2024survey}. % as well as to aid explainability\todo{add a reference - is anybody making this point? we cannot use argllms - as sepcial case of debate - if none drop} and user engagement\todo{add ref - if none we should drop}.
When agents are empowered by LLMs, they may debate internally (with themselves), e.g. for explainable and contestable claim verification~\cite{freedman2025argumentative}, and/or externally (with other agents), e.g. for modelling collaboration mechanisms \cite{Zhang_24}. 
Debate frameworks may thus involve multiple agents to generate %diverse 
potentially different 
answers and %then 
argue with each other, which may lead to %refined
better answers~\citep{debate-safety2018,debateLLMs-2024,li2024survey}.
%\ft{While we give an abstract framework for debate judgement, to be applied broadly across various debate processes,} 
%existing work has mainly focused on designing the debate process itself.
One such line of research studies the structure of debating agents~\cite{tillmann2025literature},  ranging from \textit{homogeneous} settings, where all agents share the same base model, to \textit{heterogeneous} settings that adopt models with different capabilities or roles. 
Early work mainly uses homogeneous agents, e.g.~\cite{du2024improving}, which limits diversity and may lead to error aggregation, whereas more recent work introduces heterogeneous agents to improve debate quality, e.g.~\cite{chen2024reconcile}.
Another line investigates debate protocols with multiple rounds, which often improve performance over single-round debate~\cite{liang2024encouraging}. However, recent studies show that increasing rounds alone brings diminishing returns~\cite{humulti2025}, and debate performance is more influenced by diversity and argument confidence~\cite{zhu2026demystifying, lin-hooi-2025-enhancing}.
More broadly, debate has been applied to a wide range of tasks, including reasoning enhancement~\cite{du2024improving}, claim verification~\cite{freedman2025argumentative}, AI safety~\cite{debate-safety2024}, fairness~\cite{ki2025multiple}, 
and explainability~\cite{kim2024can}.

In many settings where debates are used, their outcomes and resulting outputs are determined post-hoc by external \emph{judges}.
These are often LLMs~\cite{liang2024encouraging, humulti2025, feng2025m}, suitably prompted with information drawn from the debates.
While these LLM-based judges have shown good performance in a number of tasks, they inherit issues from the underpinning LLM, e.g. their outputs cannot be explained or contested, as is the case with other uses of LLMs~\cite{freedman2025argumentative}. In general, 
an understanding of the formal properties of judges, which are orthogonal to
downstream performance, is lacking, potentially hindering the quality of debate judgement in high-stakes settings where performance alone is not sufficient.

In this paper we develop a novel theory of \emph{debate judgement} that is applicable to all settings where agents engage in debates by providing \emph{pros and cons} for their opinions therein. 
Specifically, we identify a number of formal properties that debate judgement may be required to satisfy in general, as concerns reproducibility (in terms of deterministic behaviour and agents' permutation independence), robustness (with respect to small variations of agents' opinions and judgement method), groundedness (as concerns faithfully drawing from agents' opinions)  and %explainability
contestability (with respect to the agents' ability to influence the judge). We see these properties not as strict requirements on all debate judges, but rather as a way to guide the choice of appropriate judges for debate settings of choice.

Finally, we explore the satisfaction of these properties %formally and/or 
experimentally, for claim verification settings, for two specific alternative debate judgement methods: variants of the \emph{LLMs as a judge} idea and formal \emph{semantics} drawn from computational argumentation (see ~\cite{argumentation17,Cyras_21} for an overview of this research area and its use to support explainability). 
For the experiments, we consider two debate scenarios:
one where LLM agents generate their opinions in isolation, following internal debate, and one where LLM agents take turns to refine their opinions.
For both scenarios,
judges may be presented the raw agents' opinions or an aggregation of these opinions, before being asked to decide on a final output. 
The experiments show that the two methods give similar accuracy performances in all scenarios, but 
%the former may lack formal guarantees that the latter brings
\ft{latter shows better property satisfaction}.

%moving this to conclusions
% Overall, we make the following contributions:

% \begin{itemize}
%     \item We define a novel set of formal properties for post-hoc debate judgement in the abstract.
%     \item We instantiate the resulting theory to the setting of claim verification, in a number of different scenarios depending on whether agents conduct debates internally or directly interact with one another.
%     \item We explore the satisfaction of the proposed properties formally and experimentally, when the judges are either LLMs or argumentation semantics. 
% \end{itemize}
% In summary,
% our study indicates argumentation semantics as an ideal candidate for principled judges in debate-driven AI.

\section{Related Work}

% \todo{adapted from bias paper at AAAI2026, to be massaged....NEEDS TO BE STRUCTURED TOO, WE NEED TO ADD WHO THE JUDGE IS IN EACH CASE...} 

% survey: \\
% 1. Literature Review Of Multi-Agent Debate For Problem-Solving\\
% 2. A Survey on LLM-based Multi-Agent Systems

Despite the progress seen in debate-based approaches, existing work mainly focuses on improving debate generation, while paying little attention to how %the resulting 
debates should be judged. 
%In this work, we treat the 
We see 
debate judgement as a principled object of study, and we thus focus here on the works in this area of research.

\noindent\textbf{Non-argumentative Debate Judgement}
%In many debate-based systems, the final decision is determined by post-hoc external judges. 
Existing approaches mainly adopt two forms of debate judgement.
The first form uses voting methods to aggregate outputs of debate agents, including majority voting~\cite{yin2023exchange,chan2024chateval} and weighted voting~\cite{wang2023discussion}. 
While simple and efficient, these methods consider only the final answers and largely ignore the debate itself. The second relies on LLMs as a judge, where one or more LLMs evaluate the debate and determine the final outcome~\cite{liang2024encouraging, humulti2025, feng2025m}.
Although LLM-based judges generally outperform voting-based methods, they remain sensitive to inconsistency and positional bias~\cite{wang2024large}. 
In contrast, we study post-hoc debate judgement by comparing LLMs as a judge with formal semantics drawn from computational argumentation, demonstrating that the former may lack formal guarantees that the latter brings.

\noindent\textbf{Argumentative Debate Judgement}
Given their dialectical nature, argumentation's many frameworks have long been used as a means for representing, reasoning about and judging debates. 
These debates concerning judgement were initially exclusively between humans, whether amongst the general public \cite{Lawrence_23}, online users \cite{Karamlou_19,Young_21,Sia_22,Ruiz-Dolz_23,Oluokun_24}, judgemental forecasters \cite{Irwin_22,Gorur_25} or politicians \cite{Goffredo_25}.
%Perhaps the most common task concerning the analysis of these debates is argument mining, in which arguments are extracted from debate text for representation, analysis or downstream tasks \cite{Lippi_16,Ruiz-Dolz_21,Goffredo_23,Roush_24}.
However, aligning with recent findings showing that debates between machines may improve their performance along different metrics \cite{du2024improving}, argumentative approaches have been introduced to judge debates between machines in various tasks, e.g. to determine classifications of images \cite{visualdebates24,Kori_25} and bias detection \cite{Ayoobi_26}, amongst others \cite{Rago_23,ayoobi2023sparx,Bezou-Vrakatseli_24}.
More recently, the framing of LLMs' reasoning as debates in the form of argumentation frameworks, where the judgement determines a claim's verification, has demonstrated benefits in explainability and contestability \cite{freedman2025argumentative}, with other improvements becoming possible when a multi-agent approach is taken \cite{Gorur_25}.
Further, \citet{Sanayei_25} use quantitative argumentation frameworks to assess whether LLMs are well-placed to judge debates. However, none of the above approaches introduce a general model for debate judgement. 

\section{Background%: Computational Argumentation
}
\label{sec:back}
We %give definition of formal 
use Quantitative Bipolar Argumentation Frameworks (QBAFs) \cite{baroni2015automatic}, i.e. tuples $%\mathcal{Q}=
\left\langle\mathcal{A}, \mathcal{R}^{-}, \mathcal{R}^{+}, \tau \right\rangle$, where $\mathcal{A}$ is a finite set of \emph{arguments};
$\mathcal{R}^{-} \subseteq \mathcal{A} \times \mathcal{A}$ is a binary \emph{attack} relation;
$\mathcal{R}^{+} \subseteq \mathcal{A} \times \mathcal{A}$ is a binary \emph{support} relation;
$\mathcal{R}^{-} \cap \mathcal{R}^{+} = \emptyset$;
$\tau: \mathcal{A} \rightarrow [0,1]$ is a \emph{base score function}.
The %base score function $\tau$ 
latter assigns an a-priori belief to arguments.

The structure of QBAFs is often shown graphically, as in Figure \ref{fig_qbaf}.
This QBAF illustrates a simple family debate on the claim ``We should go to the zoo this Sunday.'' Dad gives two con arguments, about needing rest after work (Con2) and possible crowds or long queues (Con3). Mum gives one con argument about rain and animals staying inside (Con1), and one pro argument about family time (Pro3). The child gives two pro arguments, based on Dad's promise (Pro1) and learning about wild animals (Pro2).

The \emph{dialectical strength} of arguments in QBAFs can be evaluated by \emph{(gradual) semantics} 
$\sigma: \mathcal{A} \rightarrow [0,1]$, 
as defined, for example, in \cite{baroni2015automatic,amgoud2018evaluation,Potyka18,potyka2021interpreting}.
In this paper, we focus on the DF-QuAD semantics \cite{rago2016discontinuity} due to its broad applicability~\cite{rago2016discontinuity,kotonya2019gradual,cocarascu2019extracting,chi2021optimized}.
In DF-QuAD, for any argument $A\in\mathcal{A}$, \(\sigma(A)=\tau(A)-\tau(A)\cdot(v_{Aa}-v_{As})\) if \(v_{Aa}\geq v_{As}\), and \(\sigma(A)=\tau(A)+(1-\tau(A))\cdot(v_{As}-v_{Aa})\) if \(v_{Aa}<v_{As}\),
where \(
v_{Aa}=1-\prod_{\left \{ X \in \mathcal{A} \mid (X,A) \in \mathcal{R^{-}} \right \} }(1-\sigma(X))
\) is the \emph{aggregation strength} of all the attackers against $A$, and \(
v_{As}=1-\prod_{\left \{ X \in \mathcal{A} \mid (X,A) \in \mathcal{R^{+}} \right \} }(1-\sigma(X))
\) is the \emph{aggregation strength} of all the supporters for $A$.

\begin{figure}[t]
    \centering
    \includegraphics[width=0.8\linewidth]{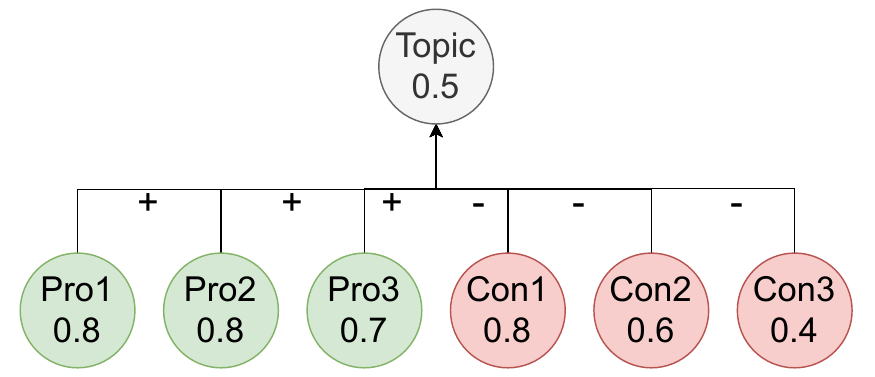}
    \caption{Example of a QBAF modelling a simple family debate about going to the zoo on Sunday. The central claim is the Topic argument; green and red nodes are pro and con arguments, respectively. Edges labelled \(+\) and \(-\) denote support and attack relations, and values denote base scores.}
    \label{fig_qbaf}
\end{figure}

% In the remainder, 
% unless specified otherwise, we assume as given a generic QBAF $\mathcal{Q}=\langle \mathcal{A}, \mathcal{R}^{-}, \mathcal{R}^{+}, \tau, w \rangle$ and $\mathcal{R}=\mathcal{R}^{-} \cup \mathcal{R}^{+}$.

%\begin{example}
For illustration, we %evaluate the QBAF in
apply DF-QuAD to the QBAF in Figure~\ref{fig_qbaf}. Since none of the three pro and three con arguments have attackers or supporters, their strengths are equal to their base scores shown in the figure.
Then, by applying aggregation and influence function, we have
$\sigma(Topic)=0.518$. \ft{This may be deemed to} supports the claim (topic argument) because its strength is greater than the neutral value of \(0.5\).
%\end{example}

DF-QuAD also satisfies several desirable properties \cite{rago2016discontinuity}. For example, it satisfies \emph{balance}  \cite{baroni2018many}: when a supporter and an attacker have the same strength, their effects on the topic argument cancel each other out (Pro2 and Con1 in Figure~\ref{fig_qbaf}). It also satisfies \emph{monotonicity} \cite{baroni2018many}: increasing the strength of a supporter (attacker), or adding a new supporter (attacker), cannot decrease (increase) the final strength of the topic argument. These properties 
%ensure that DF-QuAD satisfies reasonable rationality requirements when %determining the outcome of 
\ft{play an important role when DF-QuAD is used to judge} a debate.
% \todo{do we prove anything that needs them?}\todo{in the proof of contestability, but we have proven in ArgLLM paper}

% \begin{property}[Determinism]
% A judge \(J\) satisfies determinism iff, for any \(\mathcal{Q},\mathcal{Q}'\in\mathbb{Q}\), if
% \[
% \mathcal{Q}=\mathcal{Q}',
% \]
% then
% \[
% J(\mathcal{Q})=J(\mathcal{Q}').
% \]
% \end{property}

% Let \(d_{\mathbb{Q}}\) be a distance measure over QBAFs, and let
% \(d_{\mathrm{Rank}}\) be a distance measure over rankings.

% \begin{property}[Input Robustness]
% A judge \(J\) satisfies input robustness iff, for any \(\epsilon>0\), there exists \(\delta>0\) such that, for any \(\mathcal{Q},\mathcal{Q}'\in\mathbb{Q}\), if
% \[
% d_{\mathbb{Q}}(\mathcal{Q},\mathcal{Q}')<\delta,
% \]
% then
% \[
% d_{\mathrm{Rank}}\bigl(J(\mathcal{Q}),J(\mathcal{Q}')\bigr)<\epsilon.
% \]
% \end{property}

% Let \(d_{\mathcal{J}}\) be a distance measure over judge functions.

% \begin{property}[Judge Robustness]
% A judge \(J\) satisfies judge robustness iff, for any \(\epsilon>0\), there exists \(\delta>0\) such that, for any judge \(J'\in\mathcal{J}\) and any \(\mathcal{Q}\in\mathbb{Q}\), if
% \[
% d_{\mathcal{J}}(J,J')<\delta,
% \]
% then
% \[
% d_{\mathrm{Rank}}\bigl(J(\mathcal{Q}),J'(\mathcal{Q})\bigr)<\epsilon.
% \]
% \end{property}

% Neutrality: Adding a new argument that is irrelevant to the existing arguments, for example one with no support or attack relations to them, should not change the relative ranking among the original arguments.

\section{
%An Abstract Framework for Evaluating Post-hoc Debate Judgement
Abstract Framework}

In this section, we set out an abstract framework for evaluating \emph{post-hoc debate judgement}, whereby a \emph{judge} needs to determine the veracity of a claim given the opinions expressed by a number of agents in a debate.
%We focus on agents powered by LLMs, but without committing to any specific LLMs.
\ft{We do not impose any restriction on the agents and how they operate.}
Also, we mostly ignore how the debate %generating the agents' opinions 
is conducted, specifically whether it is single-round, with each
agent giving its opinion in isolation, as in \cite{jiang2023llm}, 
or multi-round, with agents engaging in conversations by providing their opinions incrementally and in response to other agents, as in \cite{du2024improving}. 
Our focus instead is on how and what an external  judge decides, given the agents' opinions, after they have been fully expressed by the agents.

\subsection{%Post-hoc Debate Judgement Framework
Formal set-up}

% Figure~\ref{fig:overview} provides an overview (left) and an illustration (right).\todo{not here rather in next section?}
%\todo{add two examples to the figure: one on question answering and single round, with straight outputs, the other on claim verification and multi-round and reasoning chains, both between two agents; in the first case the judgement is a ranking, in the second an output only..... refer to the figure throughout this section, in addition to adding references}

Formally, let \(\mathbb{A}\) be a set of \emph{agents}, and let \(\mathbb{O}\) denote the space of possible \emph{opinions} expressed in debates amongst the agents
about \emph{inputs} drawn from some set $\mathbb{I}$.
%For illustration,  inputs could be
\ft{Throughout the paper,} inputs 
 % questions in need of answers (as in Figure~\ref{fig:overview}, middle) 
 % or 
\ft{are \emph{claims} to be verified}% (as in Figure~\ref{fig:overview}, right)
.
% \todo{shall we commit to the latter}.
%To keep things at an abstract level, we do not impose any constraints on the format of opinions.
%Given our focus on debate, concretely, 
Opinions in $\mathbb{O}$ %will
consist of \ft{\emph{outputs}} $y \in \mathbb{Y}$
as well as of \emph{explanations} therefor. \ft{Given our focus on inputs as claims, concretely outputs amount to \emph{stances} on the input claims' veracity.}
\ft{Given our focus on debates, concretely, explanations
consist of \emph{pros} and \emph{cons}, e.g. as in \cite{freedman2025argumentative}}.
% In practice, as in \cite{}\todo{add},
% opinions may be straight \emph{outputs} from a set $\mathbb{Y}$, for example, when the inputs are questions, answers to questions, as in Figure~\ref{fig:overview}, middle.
% Alternatively, as in \cite{}\todo{add}, they may be \emph{outputs accompanied by an explanation}, i.e. 
% pairs drawn from $\mathbb{Y}\times \mathbb{E}$ for $\mathbb{E}$ a space of possible explanations,
% for example,
% when the inputs are claims, reasons for veracity labels, as
% in Figure~\ref{fig:overview}, middle.
These %explanations 
pros and cons may, for instance, be simply chunks of text, as in \cite{freedman2025argumentative} \ft{and as illustrated in the toy example in Section~\ref{sec:back}}, or take the form of structured reasoning chains towards or against the outputs, as in \cite{wei2022chain}%, or pros and cons of the outputs that informed them, as in \cite{}\todo{add}
, amongst others.
%\emph{ In this paper we focus on structured opinions  (outputs plus explanations).}
% 
Note that the  pros and cons, as explanations for the outputs,  may or may not faithfully reflect the internal reasoning for the outputs by the agents producing them, as studied in \cite{jacovi2020towards}, but ascertaining this is outside the scope of our paper, as 
our focus is on the behaviour of a judge deciding based on the opinions it sees, independently of the agents' generation processes. 

Let \(\mathbb{J}\) denote the space of possible \emph{judgements} by a judge. 
For a fixed number \(n \geq 1\) of agents $A_1, \ldots, A_n \in \mathbb{A}$ and an input $I \in \mathbb{I}$, an \emph{opinion profile} (for $I$) is a sequence%\todo{AR: shall we not also include a mapping from agents to opinions to identify speakers? FT: IMPLICIT NOW THAT AGENTS ARE FIRST CLASS CITIZENS AND WE CAN USE INDEX}
\[
\mathcal{O}=(O_1,\ldots,O_n)\in\mathbb{O}^n,
\]
where \(O_i = (y_i, e_i)\) is the opinion produced by  $A_i$ on $I$, $y_i$ is the stance and $e_i$ comprises the pro and con explanations for $y_i$.
Abstractly, a \emph{judgement method} can be represented as %a mapping NOT NECESSARILY A MAPPING=FUNCTION
\[
c:\mathbb{I}\times \mathbb{O}^n\rightarrow\mathbb{J}
\]
where, given an opinion profile \(\mathcal{O}\), \(c(I,\mathcal{O}) = (y^*, e^*) = J \in \mathbb{J}\) denotes the final judgement returned by the %judgement 
method.
\ft{We
do not impose that judgements are drawn from the same
space as agents’ opinions and, 
in general, $\mathbb{J}$ may be different from $\mathbb{O}$}. 
\iffalse 
$\mathbb{O} \neq \mathbb{J}$, although they may coincide in practice.
In any case, as in the case of opinions, judgements may also take various forms. For instance, they may amount to
%choice amongst all agent-generated opinions or a 
% ranking amongst all agent-generated opinions (as in Figure~\ref{fig:overview}, middle)
% or 
a choice amongst stances (as in Figure~\ref{fig:overview}, right). 
%If the opinions have a structure (output and explanation), 
Judgements may also
%amount to outputs alone or  ranking between outputs,  with and without 
be equipped with explanations (as in Figure~\ref{fig:overview}, right). 
\emph{In this paper we focus on judgements equipped with explanations.}
% Thus, we assume that,
% given an opinion profile \(\mathcal{O}\), \(c(I,\mathcal{O}) = (y^*, e^*) = J \in \mathbb{J}\).
However, in this section, we do not impose any restrictions as to what explanations ($e$) may amount to.
Moreover, for reasons of generality, we do not impose that $(y,e) \in \mathbb{J}$
is such that $y \in \mathbb{Y}$ and/or $e \in \mathbb{E}$ (namely we do not impose that judgements are drawn from the same space as agents' opinions), although in practice this may be the case. 
\fi
%For example
Specifically, a judgement method may return an explicit uncertainty or abstention outcome when the agents' opinions do not support a sufficiently confident choice among the outputs in $\mathbb{Y}$.

\subsection{Properties}
In the remainder of this section we focus on general formal properties that judgement methods may satisfy  (besides standard quantitative measures in the downstream task of interest, e.g. accuracy). 
Note that we do not necessarily  see all these properties as universally desirable; rather, we see them as formal guidelines to discriminate amongst different candidate judgement methods and to guide their choice.

In formalising the properties, we  assume as given some input $I \in \mathbb{I}$:
all opinion profiles $\mathcal{O}$ will be implicitly intended to be for $I$.
Also, in order to assess similarities, we make use of distance measures, as follows:

%\begin{itemize}
   %\item 
   $\bullet$ a distance measure   
   \(d_{\mathbb{O}^{n}}\) over opinion profiles;
   
   %\item 
   $\bullet$ 
   a distance measure \(d_{\mathbb{Y}}\) over stances in judgements;
   % \item a distance measure \(d_{\mathbb{E}}\) over explanations in judgements;
   
   %\item 
   $\bullet$ a distance measure \(d_{\mathbb{C}}\) over judgement methods. 
%\end{itemize}

\noindent We keep %the precise form of 
these distance measures abstract, as suitable choices therefor depend on the particular types of opinion profiles, stances, explanations, and judgement methods being considered. For instance, categorical outputs may be compared using a discrete metric, whereas textual opinions or explanations may be compared using semantic distance measures.

We now introduce %a collection of 
our properties for evaluating post-hoc debate judgement methods, with a particular focus on the components of the judgements they generate.

\begin{property}[Determinism]
A method \(c\in \mathbb{C}\) satisfies \emph{determinism} iff, for every  opinion profile 
\(\mathcal{O}=(O_1,\ldots,O_n)\in\mathbb{O}^n\), there exists a unique \(y^*\) such that
\(
c(I,\mathcal{O})=(y^*,e^*)
\)
for some explanation \(e^*\).
\end{property}

Intuitively, determinism requires that the judgement should be the same if identical input and opinion profile are given.
This property is provably satisfied if $c$ is a function, but may or may not be satisfied 
when $c$ is defined stochastically%, e.g. in terms of probabilities and thresholds
.

\begin{property}[Permutation Independence]
A method \(c\in \mathbb{C}\) satisfies \emph{permutation independence} iff, for every opinion profile \(\mathcal{O}\!=\!(O_1,\ldots,O_n)\in\mathbb{O}^n\)  and every permutation \(\mathcal{O}'\) of \(\mathcal{O}\),
if $c(I,\mathcal{O}) = (y^*, e^*)$ and $c(I,\mathcal{O}^{\prime}) = (y^{\prime}, e^{\prime})$, then $y^*=y^{\prime}$. 
\end{property}

Intuitively, permutation independence \ft{requires that the order in which the agents present their opinions does not influence the judgement. It } embeds a form of
\emph{anonymity}, requiring the judge to assess opinions based on their
content rather than on the identities of the agents expressing them.
This is desirable when agent identities may introduce bias into the judgement. However, it may be undesirable when agents differ in expertise or reliability, since the judge may reasonably take such differences into account.
\ft{Note that this formulation of the property  imposes no restriction on the  explanations generated by the judge, but could be extended to do so.}

\begin{property}[Profile Robustness]
A method \(c\in \mathbb{C}\) satisfies \emph{profile robustness} iff, for any \(\epsilon>0\), there exists \( \delta>0\)\ such that for any $\mathcal{O},\mathcal{O}' \in\mathbb{O}^n$, if $ d_{\mathbb{O}^n}(\mathcal{O},\mathcal{O}')<\delta
$, \(c(I,\mathcal O)=(y,e)\), and
\(c(I,\mathcal O')=(y',e')\), then
\(d_{\mathbb Y}(y,y')<\epsilon\).
\end{property}

Intuitively, profile robustness requires that small changes to opinion
profiles do not cause large changes in the stances of the resulting judgements. In settings
with discrete outputs, such as binary claim verification, this amounts
to requiring the output to remain unchanged under sufficiently small
changes to the opinion profile.

\iffalse
\paragraph{Judge Robustness version 1 (FT - DROP?)}
%Let \(d_{\mathbb{O}^n}\) be a distance function over opinion sequences, and let \(d_{\mathbb{Y}}\) be a distance function over judgements.
A method \(c\) satisfies \emph{judge robustness} iff, for every method $c'$, for every 
%sequence of opinions \(\mathcal{O}\in\mathbb{O}^n\)  and every
\(\epsilon>0\) there exists \(\delta>0\) such that 

\[
\forall \mathcal{O}\in\mathbb{O}^n \quad 
d_{\mathbb{Y}} (c(\mathcal{O}),c'(\mathcal{O}))<\epsilon \quad \Rightarrow \quad [
\forall \mathcal{O}' \in\mathbb{O}^n
 [d_{\mathbb{O}^n}(\mathcal{O},\mathcal{O}')<\delta \Rightarrow
d_{\mathbb{Y}}(c(\mathcal{O}),c'(\mathcal{O}'))<\epsilon
]]. \]

sequence of opinions \(\mathcal{O}\in\mathbb{O}^n\)  and every \(\epsilon>0\), there exists \(\delta>0\) such that, for every sequence of opinions \(\mathcal{O}'\in\mathbb{O}^n\),
\[
d_{\mathbb{O}^n}(\mathcal{O},\mathcal{O}')<\delta
\quad \Rightarrow \quad
d_{\mathbb{Y}}(c(\mathcal{O}),c(\mathcal{O}'))<\epsilon.
\]
Basically,...if the judge changes very little, then when applied to %the same
a similar opinion the judgement does not change or changes little...
\fi

\begin{property}[Judge Robustness]
A method \(c \in \mathbb{C}\) satisfies \emph{judge robustness} iff, %$\forall \mathcal{O}\in\mathbb{O}^n$,
for any \(\epsilon>0\), there exists $\delta>0$ such that for any
\(\mathcal{O}\in\mathbb{O}^n\), for any \(c'\in\mathbb{C}\), if $d_{\mathbb{C}}(c,c')<\delta$, \(c(I,\mathcal O)=(y,e)\), and
\(c'(I,\mathcal O)=(y',e')\), then
\(d_{\mathbb Y}(y,y')<\epsilon\).
\end{property}

Intuitively, judge robustness requires that small changes to the judging method itself do not lead to large changes in the %stances of the 
resulting judgements, when the same opinion profile is judged.
This property is desirable when small implementation-level changes to the judging method, such as minor prompt %or model 
variations \ft{if the judge is an LLM}, should not materially affect the resulting judgement. However, it may be less desirable for cases close to a decision boundary, where even small changes to the judging method may justifiably lead to a change in stance.

\begin{property}[Non-hallucination]
A method \(c \in \mathbb{C}\) satisfies \emph{non-hallucination} iff, for every opinion profile \(\mathcal{O}=(O_1,\ldots,O_n)\in\mathbb{O}^n\),
if $c(I,\mathcal O)=(y^*,e^*)$, then there exists \(i\in\{1,\ldots,n\}\) such that $y^* = y_i$.
\end{property}

Intuitively, non-hallucination requires the judge to select a stance proposed by at least one agent, rather than introducing a new stance. This may be desirable when the judge is expected to choose among a discrete set of candidate stances, as it prevents the introduction of a stance that has not been proposed or defended during the debate. However, it may be less desirable when all agents provide incorrect stances, since it prevents the judge from returning a correct alternative.

% Intuitively, this property requires that the judgement coincides exactly with at least one of the opinions.
% This requires that the space of agents' opinions is included in the space of judgements. Besides this syntactic restriction, it is a very strong property, e.g. violated in Figure~\ref{fig:overview}, right.   
% We envisage that relaxations of this property may be more desirable, whereby the judgement is required to be similar (as concerns outputs and/or explanations), rather than identical, to one of the opinions, but leave this to future work.
%\todo{AR: I like this one but it may only be suitable if the opinion space is discrete? what if the judgement is an aggregation of the opinions, e.g. in the latent space?}

\begin{property}[Judge Unanimity]
A method \(c \!\!\in\!\! \mathbb{C}\) satisfies \emph{judge unanimity} iff, for every 
opinion profile \(\mathcal{O}\!=\!(O_1, \ldots,\) \(\!O_n)\!\in\!\mathbb{O}^n\), if there exists  
 $y^*$ such that $y^*\!=\!y_i$ for all $%1\leq i \leq n
i \!\in\! \{1, \!\ldots,\! n\}$, then
$c(I,\mathcal{O})\!=\!(y^*,e)$ for some explanation $e$.
\end{property}

Intuitively, judge unanimity requires the judge to adopt the common stance when all agents agree. It is weaker than non-hallucination, since it applies only to unanimous profiles. Together, the two properties more strongly constrain the judge to follow the agents' stances. However, like non-hallucination, judge unanimity is undesirable when all agents are collectively mistaken or share the same systematic bias, as the judge would then reproduce their incorrect consensus.

\ft{The final property we propose requires a new notion of \emph{supportiveness}: intuitively, }
%\begin{definition}[Supportiveness]
for opinion profiles \(\mathcal O, \mathcal O' \in \mathbb O^n\), we write
\(
\mathcal O\preceq_{\mathrm{sup}}\mathcal O'
\)
iff \(\mathcal O'\) is overall at least as supportive as \(\mathcal O\);
similarly, for judgement outputs \(y,y'\), we write
\(
y\preceq_{\mathrm{sup}}y'
\)
iff \(y'\) is at least as supportive as \(y\).
%\end{definition}
\ft{Specifically, }
%Intuitively, 
for opinion profiles,
\(\mathcal O\preceq_{\mathrm{sup}}\mathcal O'\) may result from a
shift towards a more supportive stance, the addition or strengthening
of pro explanations, or the removal or weakening of con explanations.
For judgement outputs,
in binary claim verification, for example, 
\(\mathsf{false}\preceq_{\mathrm{sup}}\mathsf{true}\).

\begin{property}[Contestability]
A method \(c\in\mathbb C\) satisfies \emph{contestability} iff, for every
\(\mathcal O,\mathcal O'\in\mathbb O^n\), if
\(
\mathcal O \preceq_{\mathrm{sup}} \mathcal O'
\),
$c(I,\mathcal O) = (y,e)$
and
$c(I,\mathcal O') = (y',e')$,
then
$y\preceq_{\mathrm{sup}}y'$.
\end{property}

Intuitively, contestability requires a judgement method to respond
consistently to directed changes in the opinion profile: making the
profile overall more supportive should not result in a less supportive
judgement, as in \cite{freedman2025argumentative}. In concrete settings, contestability may additionally be
studied in terms of the minimal modification required to obtain a
desired judgement, as in \cite{yin2024ceqarg}.
This property provides users with a mechanism to challenge and potentially correct erroneous judgements, but the same mechanism may also be exploited to manipulate %the resulting judgement
judgements. In practice, this risk may be mitigated by restricting contestation to authorised users \cite{AyoobiKR}.

% It is a weak property, as it only envisages changes `local' to individual opinion profiles (so that wrong judgements for different profiles can be accommodated by different changes). 

% \paragraph{Non-hallucination of explanations...}
% When included in the judgements, the rationales may be drawn from those in the opinions or not,  
% ...

\iffalse
\todo{maybe define a function}
\todo{to be decided: cognitive size, identity to the mechanism}
\paragraph{Faithfulness% of Judgement Explanations
.}
A method \(c \in \mathbb{C}\) satisfies \emph{faithfulness}  iff, for every opinion profile \(\mathcal{O}=(O_1,\ldots,O_n)\in\mathbb{O}^n\), 
if $c(I,\mathcal{O})=(y^*,e^*) \in \mathbb{J}$, 
then
$y^*$ is reproducible from $E$.

This property is formulated in  terms of a generic notion of `reproducibility'....\todo{ what does reproducible mean formally? the same $c$? another $c'\in \mathbb{C}$? an external algorithm?}.
%if the opinions include a rationale...
Intuitively, a method satisfies this property if %explanation for a judgement is faithful if 
the judgements it generates can be reconstructed from the explanations  thereof together 
%with the agents' opinions (the full opinions? just their output component?).
the original input.

%\paragraph{Cognitive Tractability????}
%\cite{DBLP:conf/aaai/CyrasLMT19}
%cognitive tractability, in the sense that each explanation pertaining to schedule S and presented to the user should be polynomial in the size of S.

% \paragraph{Contestability of Rationales for judgements}

\fi

\begin{figure}[t]
    \centering
\includegraphics[width=0.9\linewidth]{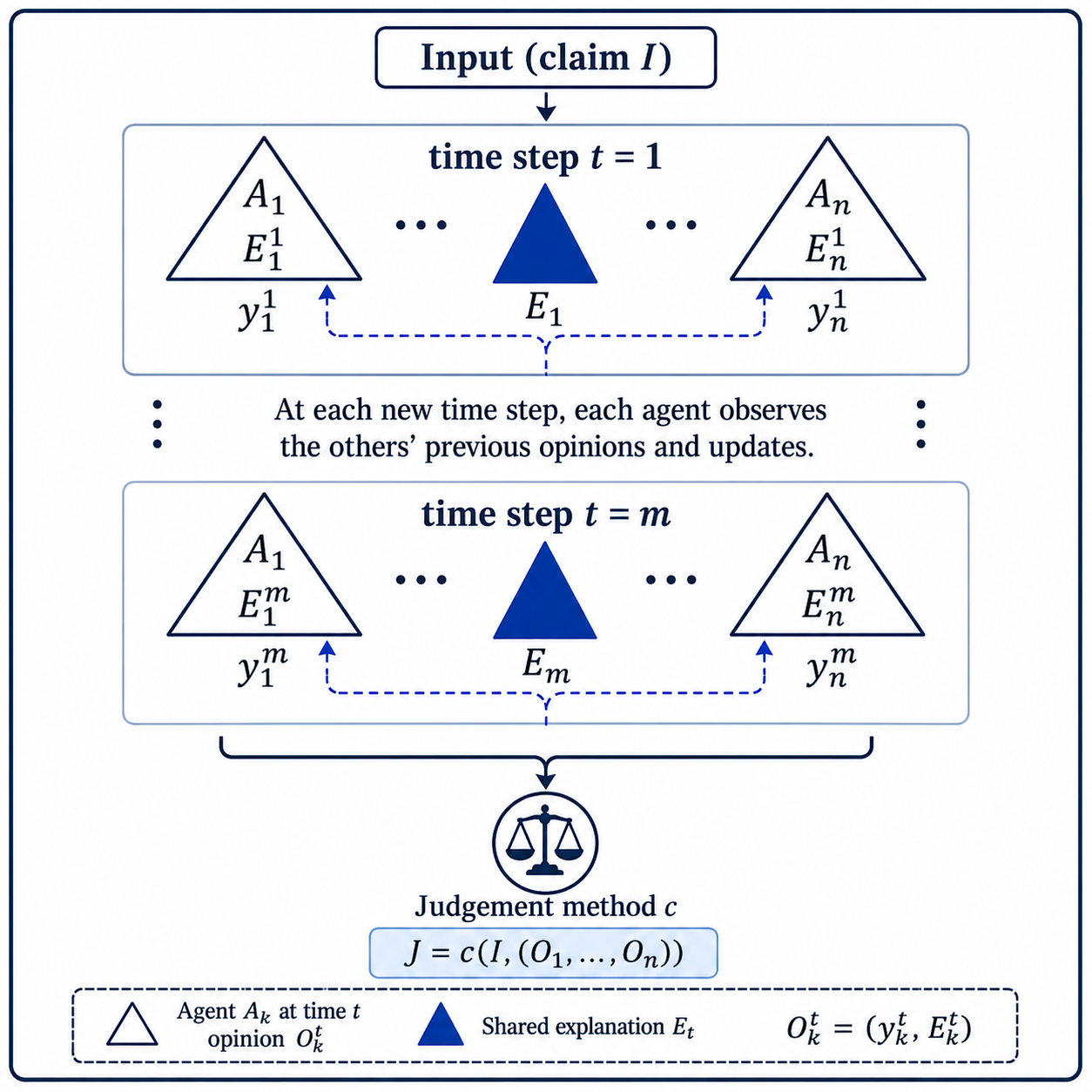}
    \caption{
    Overview of \emph{post-hoc debate judgement}: $n \geq 1$ agents ($A_1, \ldots, A_n$) debate an \emph{input} (claim) $I$ over $m \geq 1$ time steps. At time step $t \in \{1, \ldots,m\}$, agent $A_k$ ($k \in \{1, \ldots, n\}$) produces \emph{opinion} $O_k^t$ made up of explanation (pros and cons) $E_k^t$ and output (stance) $y_k^t$; then, a judge gives a \emph{judgement} (stance) $J$ 
    %based on the  agents' opinions. 
    determined by a \emph{judgement method} $c$ applied to the input and
    $O_1, \ldots, O_n$, drawn from the debate. At each time step $t$, agents may contribute to a shared explanation $E_t$, \ft{feeding into $O_1, \ldots, O_n$.}
    %\ft{$J$ may focus, specifically, on aggregations $E_1, \ldots, E_m$ of the agents' explanations at the respective time steps.}
At each time step %$k \in \{2, \ldots, m\}$
, each agent sees the other agents' opinions at the previous time step %$k-1$ 
and can adjust its own opinion accordingly. 
    % Left:  overview of \emph{post-hoc debate judgement}: $n \geq 1$ agents ($A_1, \ldots, A_n$) debate an \emph{input} claim $I$ by providing \emph{opinions} ($O_1, \ldots, O_n$); then, a judge gives a  \emph{judgement} $J$ based on the  agents' opinions. Abstractly, $J$ is determined by a \emph{judgement method} $c$ applied to the input and agents' opinions.
    % Middle: a simple illustration, with three agents, where the input is a question, the agents' opinions are answers, and the judgement is a ranking over answers. Right: another simple illustration, with two agents, where the input is a claim, the agents' opinions are veracity labels with an explanation, and the judgement is a veracity label with an explanation (all explanations are in brackets).
    }
    \label{fig:overview}
\end{figure}
\section{%Experimental
\ft{Evaluation} Setting}
\label{sec:experimental-setting}

We consider two %experimental 
concrete scenarios for the evaluation of post-hoc debate judgement.
In both scenarios, we use LLM-driven agents (three in the experiments, empowered by %gpt-4o-mini, Llama-3.3-70B-Instruct-Turbo, and Qwen3.5-9B respectively
different LLMs), using  prompts \ft{(as detailed in the Supplementary Material (SM)%\todo{generic so no problem with submitting it late with different appendices ...first time we introduce SM give the full Supplementary Material}
)} to generate stances on input claim veracity and/or explanations thereof, made of pros and cons (but a different number in the two evaluation scenarios). 
In both scenarios, 
claims' stances are real numbers in $[0,1]$%, representing the degrees to which an agent supports  the claims being true, where a score greater than or equal to 0.5 is treated as support and a score below 0.5 is treated as opposition \ft{(for and to veracity, respectively)}
. 
Also, in both scenarios, 
the pros and cons amount to \emph{textual arguments} with associated \emph{base scores} (in $[0,1]$) for the arguments, generated after the arguments by the agents themselves, and expressing %the agents' 
\ft{their} confidence in the arguments,
in the spirit of \cite{wachsmuth-etal-2024-argument}.

Both scenarios are instances of Figure~\ref{fig:overview}.
In the first, \emph{single-turn} scenario, $m=1$, namely
agents generate their opinions independently, in one go.
In the experiments, for this scenario we focus on opinions where explanations consist of a single pro and a single con.  
%We refer to this scenario as \emph{single-turn}.
In the second, \emph{multi-turn} scenario, $m > 1$ \ft{(in the experiments we set $m=3$)}
and agents can refine or even change completely their opinions over time.
In the experiments, for this scenario we focus on opinions with
any number of pros and cons in the explanation component. 
%We refer to this scenario as \emph{multi-turn}.

For both scenarios, we consider two variants for the generation of the stance and explanations components of opinions by each agent: 
the stance is generated before or after the explanation is generated.
We refer to the former variant as \emph{prior} and to the  latter variant as \emph{post}. To assess the usefulness of explanations in debates, we also consider
a third variant % \todo{CHECK: also for the multi turn scenario?} 
alongside the prior and post variants, where no stance score is generated. 
We refer to this third variant as \emph{none}.

In all scenarios and variants, we consider two families of judges:
\emph{LLM-as-a-judge} (in all the experiments we use gpt-4o as its  underpinning LLM, \ft{see prompts in the SM}) and 
\emph{semantics-as-a-judge}
(in all the experiments in the paper we use the commonly adopted DF-QuAD~\cite{rago2016discontinuity} as the semantics).
Semantics-as-a-judge maps explanations to QBAFs: input claims become topic arguments, pros and cons become supporting and attacking arguments, their base scores are as in the explanations, and the topic arguments' base score may be the neutral  0.5 (in the \emph{none} variant) or the claim's stance (in the \emph{prior} variant). 
%The judge predicts a positive stance if DF-QuAD computes a value of 0.5 or over, and a negative stance otherwise.

In all scenarios and variants,  
each judge is presented with information from (the last time step in) the debate.
We consider two settings as concerns this information:
in the \emph{combined} setting, the judge is given a single explanation made of all pros and cons in the explanations from all agents (as in the toy example in Section~\ref{sec:back});
%In contrast, 
in the \emph{separate} setting,   
%the judge 
it is given the individual explanations from the agents separately.
%where each agent forms one QBAF, and a combined QBAF, where all three agents are included in one QBAF. 

Finally, in the multi-turn scenario, we consider two \ft{sub-scenarios}: a \emph{private} one where each agent generates its own explanations, but can adjust them depending on the explanations from the other agents, and a \emph{shared} sub-scenario where the agents collaboratively construct a single explanation ($E_1, \ldots, E_m$ in Figure~\ref{fig:overview}). However, note that the agents maintain their own private stances in both sub-scenarios.

For each evaluation scenario and variant thereof, we measure judgement accuracy as well as  satisfaction of the properties given earlier.
To measure accuracy,
we assume that judges predict a positive judgement when computing a stance of 0.5 or over, and a negative judgement otherwise.

\section{Experiments}

\subsection{Data and Configurations}
\noindent\textbf{Data generation}
%This experiment uses 
We use the claims labelled as True or False in the claim verification dataset from \cite{freedman2026neurosymbolic}. The resulting dataset contains 500 claims, with approximately balanced numbers of True and False instances. For each claim, we construct three LLM-based agents to simulate opinions from different sources ($A_1$: GPT-4o-mini, $A_2$: Llama-3.3-70B-Instruct, $A_3$: Qwen/Qwen3.5-9B). 
With 500 claims and 3 agents, the final dataset for the single-turn scenario contains 1500 agent-level opinions, which are then used in both LLM-as-a-judge and Semantics-as-a-judge experiments. 
%\todo{Added multi-turn, to cross-check}
In the multi-turn scenario, we consider the two sub-scenarios (\emph{private} and \emph{shared}) described in Section \ref{sec:experimental-setting}%: a \emph{private} scenario where each agent generates its own explanations, but can adjust them depending on the explanations from the other agents, and a \emph{shared} scenario where the agents collaboratively construct a single explanation. However, the agents maintain their own private stances in both scenarios
. 
Since we run the debate for three time steps, this results in $500 \times 3 \times 3 = 4500$ agent-level stances for each scenario, $4500$ private agent explanations in the private scenario and $1500$ shared explanations in the shared scenario. 
Except where otherwise stated, we only consider stances and explanations from the final round of the debate during judging.

\begin{table}[t]
\centering
\small
\caption{Judgement accuracy of LLM-based and
semantics-based judges under different scenarios, settings and variants.
\ft{We also report accuracy for {\bf T}rue and {\bf F}alse  judgements.
Overall best results are in {\bf bold}.}
% Judgement accuracy of the LLM-as-a-judge and
% QBAF-semantics-as-a-judge methods under different variations.
% For LLM, \emph{Setting} indicates whether agent identities are
% anonymized, while \emph{Variant} denotes the stance-score setting.
% For QBAF, \emph{Setting} denotes the QBAF structure, while
% \emph{Variant} denotes the base-score setting.
% All accuracies are reported as percentages, with the best results
% for each method shown in bold.
% Judgement accuracy of the LLM-as-a-judge and QBAF-semantics-as-a-judge methods at the end of a third (final) round of multi-turn debate under different variations. Column names have the same meaning as in Table \ref{tab:judgement-accuracy}, except for \emph{Scenario}, which denotes the debate explanation type.
}
\label{tab:judgement-accuracy}
{\setlength{\tabcolsep}{3pt}
\begin{tabular}{lllccc}
\toprule
\textbf{Scenario} & \textbf{Setting} & \textbf{Variant} & \textbf{Acc.} & \textbf{T-Acc.} & \textbf{F-Acc.} \\
\midrule
\multicolumn{6}{l}{\textit{LLM as a judge}} \\
\midrule
Single-turn & No anon. & None  & 72.20 & 54.22 & 90.04 \\
Single-turn & No anon. & Prior & 72.80 & 53.41 & 92.03 \\
Single-turn & No anon. & Post. & 74.60 & 57.03 & 92.03 \\
Single-turn & Anon.    & None  & 72.00 & 53.41 & 90.44 \\
Single-turn & Anon.    & Prior & 74.20 & 57.03 & 91.24 \\
Single-turn & Anon.    & Post. & 75.00 & 58.63 & 91.24 \\
\midrule
Multi-t. private & No anon. & None & 73.40 & 53.01 & 93.63 \\
Multi-t. private & No anon. & Prior & 72.40 & 51.00 & 93.63 \\
Multi-t. private & No anon. & Post. & 73.00 & 53.01 & 92.83 \\
Multi-t. private & Anon. & None & 74.00 & 56.22 & 91.63 \\
Multi-t. private & Anon. & Prior & 74.00 & 55.02 & 92.83 \\
Multi-t. private & Anon. & Post. & 74.00 & 56.22 & 91.63 \\
Multi-t. shared & No anon. & None & 73.60 & 54.22 & 92.83 \\
Multi-t. shared & No anon. & Prior & 74.00 & 53.82 & \textbf{94.02} \\
Multi-t. shared & No anon. & Post. & 74.00 & 54.62 & 93.23 \\
Multi-t. shared & Anon. & None & 73.40 & 53.82 & 92.83 \\
Multi-t. shared & Anon. & Prior & 74.80 & 56.22 & 93.23 \\
Multi-t. shared & Anon. & Post. & 74.00 & 54.22 & 93.63 \\
\midrule
\multicolumn{6}{l}{\textit{Semantics-as-a-judge}} \\
\midrule
Single-turn & Separate & 0.5         & 68.40 & 48.59 & 88.05 \\
Single-turn & Separate & Prior      & 74.40 & 59.84 & 88.84 \\
Single-turn & Combined & 0.5         & 64.60 & 53.82 & 75.30 \\
Single-turn & Combined & Avg. Prior & \textbf{76.00} & 65.06 & 86.85 \\
\midrule
Multi-t. private & Separate & 0.5 & 70.40 & 49.40 & 91.24 \\
Multi-t. private & Separate & Prior & 75.40 & 62.25 & 88.45 \\
Multi-t. private & Combined & 0.5 & 72.20 & \textbf{67.07} & 77.29 \\
Multi-t. private & Combined & Avg. Prior & 75.40 & 66.67 & 84.06 \\
Multi-t. shared & Separate & 0.5 & 69.80 & 53.01 & 86.45 \\
Multi-t. shared & Separate & Prior & 73.40 & 61.04 & 85.66 \\
Multi-t. shared & Combined & 0.5 & 69.80 & 53.01 & 86.45 \\
Multi-t. shared & Combined & Avg. Prior & 73.40 & 61.04 & 85.66 \\
\bottomrule
\end{tabular}
}
\end{table}

\noindent \textbf{Configurations}
For \textbf{LLM-as-a-judge}, we use GPT-4o as the judge model with a prompt-based setup. We consider both \emph{non-anonymised} and \emph{anonymised} %inputs,\todo{not sure what this means- which agent?}
settings \ft{as concerns agents' identity}. The anonymised setting is aligned with \textbf{semantics-as-a-judge} (since argumentation semantics does not use agent identity information). 
%We test three input variants: no stance score, prior stance score, and posterior stance score, corresponding to different levels of stance information available to the judge.\todo{repeated from experimental set-up?}
For the latter judge, we use the commonly adopted DF-QuAD \ft{(but report on the use of QE \cite{Potyka18} as an alternative semantics in the SM)}.
%we consider two variants: one forming \emph{separate} QBAFs from the opinions of each agent and one forming a single, \emph{combined} QBAF for all agents. 
In the multi-turn \emph{shared}  scenario, the %separate QBAFs 
\ft{private explanations all amount} to the same, shared explanation but may differ in the base scores on the topic argument depending on the agents' private stance scores. 
To align with the \ft{use of LLM-as-a-judge}, we test two variants: \emph{none} and \emph{prior} \ft{(see Section~\ref{sec:experimental-setting})}. 
%In the first variant, the topic argument   uses a neutral base score of 0.5. 
In the %prior-stance 
second variant, 
for the \emph{separate} setting, the topic argument's base score is the agent’s initial stance score before it has generated an explanation \ft{(we do not consider the {\em post} variant for semantics-as-a-judge, as the semantics aggregates the arguments and base scores into a final judgement)}; for the \emph{combined} setting, we use the average of the three agents’ prior stance scores as the topic argument's base score. %We do not use posterior stance scores for QBAF, since QBAF reasoning itself aggregates the arguments and base scores into a final judgement.
%\todo{Do we need to talk about how we determine the final judgement for each QBAF variant or is this somewhere else? ADDED IN SECT 5} 
%All QBAF experiments use the commonly adopted DF-QuAD gradual semantics~\cite{rago2016discontinuity}.

\subsection{Accuracy}
Table~\ref{tab:judgement-accuracy} shows that LLM-as-a-judge and semantics-as-a-judge achieve comparable performance, with average accuracies above 70\% for both families of judges. The best overall accuracy is obtained by semantics-as-a-judge (76.00\%), slightly higher than the best LLM-as-a-judge result (75.00\%). In terms of class-wise performance, the highest True accuracy is achieved by 
%the same QBAF configuration (65.06\%)
\ft{the same judge (67.07\%)}, whereas the highest False accuracy is achieved by LLM-as-a-judge \ft{(94.02\%)}, possibly because the LLM can leverage its background knowledge to identify implausible or factually inconsistent claims. In addition, across both judge families, incorporating stance-score information \ft{(prior and post variants)} consistently improves accuracy over the corresponding no-stance or neutral-initialization settings \ft{(none variants)}, suggesting that stance scores may provide an agent-level global judgement signal that complements the local evidence captured by pro/con arguments and their base scores.

In the multi-turn scenario, semantics-as-a-judge using private explanations and prior stance scores achieves the best overall accuracy of $75.40\%$, slightly better than the best performing LLM-as-a-judge variant with an accuracy of $74.80\%$. However, both of these agent and judge configurations underperform the best-performing configurations from the single-turn scenario, suggesting that an extended debate over multiple turns is not helpful for achieving higher performance on the considered task \ft{(in line with prior work~\cite{humulti2025})}. 
Also, compared to the single-turn setting, 
in the multi-turn, providing the prior and posterior stance scores to the LLM judge tended to have a lower and more variable effect, but still resulted in consistent improvements for semantics-as-a-judge. While the private explanations sub-scenario was associated with better performance when used with semantics-as-a-judge compared to the shared explanations sub-scenario, there was no substantial difference between the two for LLM judges.

\subsection{Properties}

\ad{We provide experimental results from the evaluation of different properties below. Except where otherwise stated, we focus on configurations involving prior stance scores and no anonymisation/separate QBAFs.} \ft{(The SM includes additional experiments.)}

\noindent \textbf{Determinism}
\ad{We evaluate determinism by running matching LLM-as-a-judge configurations three times at the sampling temperature of 0.0}
%To examine determinism, we run matching LLM-as-a-judge configurations three times 
and check whether the resulting accuracies are identical.
%We first use the default temperature value of 1.0. As shown in Table~\ref{tab:llm-determinism}, the three runs produce different results. We then set the temperature to 0.0 to reduce sampling randomness. 
\ad{The resulting accuracy values vary between 73.20\% and 73.40\%}. This suggests that the LLM-as-a-judge method is not fully deterministic, even under zero-temperature decoding. A possible reason is that LLM API inference may involve implementation-level nondeterminism, such as batching. In contrast, the semantics-as-a-judge consistently produces exactly the same results, because QBAF semantics is defined by fixed analytical functions and is thus deterministic% by definition
.

\noindent \textbf{Permutation Independence}
We examine permutation independence by varying the order in which the agents' opinions are presented to the LLM judge. The results show that the LLM-as-a-judge method is sensitive to agent ordering: the overall accuracy varies \ad{from $72.60\%$ to $73.40\%$}, indicating a violation of permutation independence. In contrast, the semantics-as-a-judge satisfies this property, since QBAF semantics \ad{does not depend on agent ordering}.
%depends only on the set of arguments, support/attack relations, and base scores, rather than the order in which agents are listed.

\begin{table}[t]
\centering
\small
\caption{Profile-robustness results for semantics-as-a-judge
and LLM-as-a-judge, with pro scores increased by \(0.1\).
% \emph{Pro \(+0.1\), capped} denotes increasing pro scores by \(0.1\),
% capped at their maximum value. All accuracies are reported as
% percentages, and \(\Delta\) Acc. is reported in percentage points.
}
\label{tab:profile-robustness}
{\setlength{\tabcolsep}{4pt}

\begin{tabular}{lllcc}
\toprule
\textbf{Method} & \textbf{Scenario} & \textbf{Profile}
& \textbf{Acc.} & \(\boldsymbol{\Delta}\)\textbf{ Acc.} \\
\midrule
LLM & Single-turn & Baseline
& 72.80 & \(+0.00\) \\
LLM & Single-turn & Pro \(+0.1\), capped
& 75.20 & \(+2.40\) \\
LLM & Multi-t. private & Baseline & 72.40 & \(+0.00\) \\
LLM & Multi-t. private & Pro \(+0.1\), capped & 74.00 & \(+1.60\) \\
\midrule
QBAF & Single-turn & Baseline
& 74.40 & \(+0.00\) \\
QBAF & Single-turn & Pro \(+0.1\), capped
& 74.60 & \textbf{+0.20} \\
QBAF & Multi-t. private & Baseline & 75.40 & \(+0.00\) \\
QBAF & Multi-t. private & Pro \(+0.1\), capped & 75.00 & \(-0.40\) \\
\bottomrule
\end{tabular}
}
\end{table}

\begin{table}[t]
\centering
\small
\caption{Judge-robustness results for the QBAF-based
and LLM-based judges, following changes to their parameters.
%with prior stance scores and named agents/separate QBAFs. \emph{Temp.} denotes the sampling temperature used by the LLM while \emph{Conserv.} denotes the conservativeness parameter of the QBAF semantics. $\Delta$ Acc. denotes accuracy change compared to $1.0$ baselines.
}
\label{tab:judge-robustness}
{\setlength{\tabcolsep}{4pt}
\begin{tabular}{lllcc}
\toprule
\textbf{Method} & \textbf{Scenario} & \textbf{Temp. / Conserv.}
& \textbf{Acc.} & \(\boldsymbol{\Delta}\)\textbf{ Acc.} \\
\midrule
LLM & Single-turn & 0.9
& 72.40 & \(-0.40\) \\
LLM & Single-turn & 1.0
& 72.80 & \(+0.00\) \\
LLM & Single-turn & 1.1
& 73.00 & \(+0.20\) \\
LLM & Multi-t. private & 0.9
& 72.60 & \(+0.20\) \\
LLM & Multi-t. private & 1.0
& 72.40 & \(+0.00\) \\
LLM & Multi-t. private & 1.1
& 72.80 & \(+0.40\) \\
\midrule
QBAF & Single-turn & 1.0
& 74.40 & \(+0.00\) \\
QBAF & Single-turn & 1.1
& 74.40 & \(+0.00\) \\
QBAF & Single-turn & 1.2
& 74.20 & \(-0.20\) \\
QBAF & Multi-t. private & 1.0
& 75.40 & \(+0.00\) \\
QBAF & Multi-t. private & 1.1
& 75.40 & \(+0.00\) \\
QBAF & Multi-t. private & 1.2
& 75.80 & \(+0.40\) \\
\bottomrule
\end{tabular}
}
\end{table}

\noindent \textbf{Profile Robustness}
We measure profile robustness by examining the change in accuracy after perturbing the opinion profile, where all pro argument scores are increased by 0.1 and capped at 1.0. A smaller change in accuracy indicates greater profile robustness. As shown in Table~\ref{tab:profile-robustness}, the LLM judge \ad{accuracy changes by a magnitude of 2.40\% for the single-turn and 1.60\% for the multi-turn scenario}, whereas the QBAF judge changes by only \ad{0.20\% and 0.40\% in both scenarios, respectively}. \ad{This indicates} that QBAF-based judging is more profile-robust under the same perturbation.

\noindent \textbf{Judge Robustness}
We measure judge robustness by perturbing method-specific parameters and observing the resulting change in accuracy. For LLM-as-a-judge, we perturb the temperature by \(\pm 0.1\). For QBAF-based judging, we perturb the conservativeness parameter of the DF-QuAD semantics, which controls how strongly aggregated supporting and attacking arguments influence the final strength of the topic argument. As shown in Table~\ref{tab:judge-robustness}, both methods change only slightly by at most \(0.40\%\). This suggests that both methods are relatively stable within the tested parameter ranges. Since temperature and conservativeness affect different decision mechanisms, \ad{the results illustrate within-method robustness but are not directly comparable.}
\begin{table}[t]
\centering
\small
\caption{Non-hallucination results for the semantics-based
and LLM-based judges with prior stance scores.
\emph{Rate} denotes the proportion of judgements satisfying
non-hallucination.}
\label{tab:non-hallucination}
\begin{tabular}{lllcc}
\toprule
\textbf{Method} & \textbf{Scenario} & \textbf{Setting} & \textbf{Satisfied} & \textbf{Rate} \\
\midrule
LLM & Single-turn & No anon. & 471/500 & 94.20 \\
LLM & Single-turn & Anon. & 471/500 & 94.20 \\
LLM & Multi-t. private & No anon. & 475/500 & 95.00 \\
LLM & Multi-t. private & Anon. & 480/500 & 96.00 \\
LLM & Multi-t. shared & No anon. & 475/500 & 95.00 \\
LLM & Multi-t. shared & Anon. & 478/500 & 95.60 \\
\midrule
QBAF & Single-turn & Separate & 489/500 & 97.80 \\
QBAF & Single-turn & Combined & 494/500 & 98.80 \\
QBAF & Multi-t. private & Separate & 494/500 & 98.80 \\
QBAF & Multi-t. private & Combined & \textbf{498/500} & \textbf{99.60} \\
QBAF & Multi-t. shared & Separate & \textbf{498/500} & \textbf{99.60} \\
QBAF & Multi-t. shared & Combined & \textbf{498/500} & \textbf{99.60} \\
\bottomrule
\end{tabular}
\end{table}

% Judge-unanimity results (round 2).
% Agent positions taken from the agents' prior stance scores.
\begin{table}[t]
\centering
\small
\caption{Judge-unanimity results for the semantics-based
and LLM-based judges. \emph{Rate} denotes the proportion of judgements satisfying judge unanimity for unanimous opinions.}
\label{tab:judge-unanimity}
\begin{tabular}{lllcc}
\toprule
\textbf{Method} & \textbf{Scenario} & \textbf{Setting} & \textbf{Satisfied} & \textbf{Rate} \\
\midrule
LLM & Single-turn & No anon. & 287/316 & 90.82 \\
LLM & Single-turn & Anon. & 287/316 & 90.82 \\
LLM & Multi-t. private & No anon. & 291/316 & 92.09 \\
LLM & Multi-t. private & Anon. & 296/316 & 93.67 \\
LLM & Multi-t. shared & No anon. & 301/326 & 92.33 \\
LLM & Multi-t. shared & Anon. & 304/326 & 93.25 \\
\midrule
QBAF & Single-turn & Separate & 305/316 & 96.52 \\
QBAF & Single-turn & Combined & 310/316 & 98.10 \\
QBAF & Multi-t. private & Separate & 310/316 & 98.10 \\
QBAF & Multi-t. private & Combined & 314/316 & 99.37 \\
QBAF & Multi-t. shared & Separate & \textbf{324/326} & \textbf{99.39} \\
QBAF & Multi-t. shared & Combined & \textbf{324/326} & \textbf{99.39} \\
\bottomrule
\end{tabular}
\end{table}

\noindent \textbf{Non-hallucination}
Table~\ref{tab:non-hallucination} reports the non-hallucination results, where a judgement satisfies the property if its final decision matches the stance of at least one agent. The QBAF-based methods universally achieve higher satisfaction rates, \ad{with the best rate of 99.60\%, compared with 96\% for the best LLM-based variant}. This suggests that QBAF-based judging is less likely to produce a final decision that is unsupported by any agent stance.

% OLD TABLE, MERGED INTO THE TABLE ABOVE
% \begin{table}[t]
% \centering
% \small
% \caption{Non-hallucination results for the QBAF-semantics-as-a-judge
% and LLM-as-a-judge methods with prior stance scores.
% \emph{Rate} denotes the proportion of judgements satisfying
% non-hallucination. All rates are reported as percentages.
% Best results are shown in bold.}
% \label{tab:non-hallucination}
% \begin{tabular}{lcc}
% \toprule
% \textbf{Method} & \textbf{Satisfied} & \textbf{Rate} \\
% \midrule
% LLM No anon. & 471/500 & 94.20 \\
% LLM Anon. & 471/500 & 94.20 \\
% \midrule
% QBAF Combined & \textbf{494/500} & \textbf{98.80} \\
% QBAF Separate & 489/500 & 97.80 \\
% \bottomrule
% \end{tabular}
% \end{table}

\noindent \textbf{Judge Unanimity}
Table~\ref{tab:judge-unanimity} reports the judge-unanimity results. This property is evaluated on the opinion profiles where all three agents have the same stance. The QBAF-based methods achieve higher satisfaction rates, with \ad{the best rate of 99.39\% compared to 93.67\% for LLMs}. This suggests that QBAF-based judging better preserves unanimous agent agreement in this experimental setting.

% OLD TABLE MERGED INTO A COMBINED TABLE ABOVE
% \begin{table}[t]
% \centering
% \small
% \caption{Judge-unanimity results for the QBAF-semantics-as-a-judge
% and LLM-as-a-judge methods with prior stance scores. The property is
% evaluated only on the \(316\) unanimous opinion profiles.
% \emph{Rate} denotes the proportion of judgements satisfying judge
% unanimity. All rates are reported as percentages.
% Best results are shown in bold.}
% \label{tab:judge-unanimity}
% \begin{tabular}{lcc}
% \toprule
% \textbf{Method} & \textbf{Satisfied} & \textbf{Rate} \\
% \midrule
% LLM No anon. & 287/316 & 90.82 \\
% LLM Anon. & 287/316 & 90.82 \\
% \midrule
% QBAF Combined & \textbf{310/316} & \textbf{98.10} \\
% QBAF Separate & 305/316 & 96.52 \\
% \bottomrule
% \end{tabular}
% \end{table}

\noindent \textbf{Contestability}
Using the same perturbation setting as the profile-robustness experiment, we increase all pro argument base scores by \(0.1\), capped at \(1.0\), and evaluate whether the judge output moves in the expected positive direction. As shown in Table~\ref{tab:contestability},
%\ft{the SM}
the semantics-as-a-judge satisfies contestability in all 500 cases. This follows from the \emph{monotonicity} of DF-QuAD semantics: increasing the base scores of pro arguments cannot decrease the final strength of the topic argument \cite{baroni2018many}. In contrast, the LLM-based judge does not strictly satisfy contestability: \ad{up to 93/500 scores decrease and up to 6 predictions} change from True to False after strengthening the pro arguments. This indicates that LLM-as-a-judge is not constrained by a monotonic update rule, so increasing pro evidence may still lead to a less supportive final judgement.
\begin{table}[t]
\centering
\small
\caption{Contestability results for increasing all pro scores by $0.1$.
\emph{Score $\geq$} reports cases where the judge score after perturbation is greater than or equal to the original score.}
\label{tab:contestability}
\begin{tabular}{llccc}
\toprule
\textbf{Method} & \textbf{Scenario} & \textbf{Score $\geq$} & \textbf{F \(\rightarrow\) T} & \textbf{T \(\rightarrow\) F} \\
\midrule
LLM & Single-turn & 407/500 & 14 & 6 \\
LLM & Multi-t. private & 442/500 & 14 & 2 \\
\midrule
QBAF & Single-turn & \textbf{500/500} & 15 & \textbf{0} \\
QBAF & Multi-t. private & \textbf{500/500} & 6 & \textbf{0} \\
\bottomrule
\end{tabular}
\end{table}

\section{Conclusions}
Overall, we make the following contributions: 1) We define a novel set of formal properties for post-hoc debate judgement in the abstract;
    2) We instantiate the resulting theory to the setting of claim verification, in a number of different scenarios depending on whether agents conduct debates internally or directly interact% with one another
    ; 3) We explore the satisfaction of the proposed properties %formally and
    experimentally, when the judges are either LLMs or argumentation semantics.
%In summary,
Our study indicates argumentation semantics as an ideal candidate for principled judges in debate-driven AI.
Several directions for future work remain open though.
First, our list of properties is not exhaustive, e.g. it would be interesting to formalise and evaluate properties of 
open-mindedness, requiring that judges do not focus only on a subset of the agents' opinions%, %every opinion matter...
%and judge faithfulness to the agents' opinions
. 
Second, it would be interesting to explore  a larger number of agents, rounds, different LLMs as agents and as judges, and different argumentation semantics.
Third, 
we focused on experimental ananlysis of properties, but it would be interesting to conduct a theoretical analysis too. 
%Fourthly, %we restricted attention to shallow explanations only.
Finally, we focused on claim verification, but other downstream tasks, e.g. questions answering, would be worthwhile. 
%``round-based'' debates may satisfy some properties within each round, but not over the whole debate? e.g. permutation-independence .... was round-based \cite{debateLLMs-2024}

%\todo{@XY: double-check the whole paper: british english}

\newpage
\bibliography{aaai2027}

% \section*{Acknowledgments}

\newpage
\setcounter{page}{1}
\onecolumn
\appendix
\section*{Supplementary Material for\\``A Theory of Post-hoc Debate Judgement''}

\section{Computing Environment}
\paragraph{Single-Turn Experiments.}
All experiments were conducted locally on a personal laptop running Microsoft Windows 11 Home, equipped with an Intel Core Ultra 5 225H CPU (14 cores) and 32 GB of RAM. All computations were performed on the CPU or by remote providers accessed via an API. The software stack consisted of Python 3.11.9, OpenAI 1.40.6, NumPy 2.4.0, pandas 2.3.3, scikit-learn 1.8.0, and Matplotlib 3.10.8.

\paragraph{Multi-Turn Experiments.} All experiments were conducted on a desktop computer running a customized distribution of Ubuntu 22.04.5 LTS, equipped with an AMD® Ryzen 7 pro 3700 8-core CPU and 16 GB of RAM. All computations were performed on the CPU or by remote providers accessed via an API. The software stack consisted of Python 3.12.10, OpenAI 2.44.0, Pydantic 2.13.4 and Tenacity 9.1.4.

\section{Experimental Results}
\label{sec_expr_results_appendix}
\subsection{Additional Results for Determinism and Permutation Independence}

\begin{table}[H]
\centering
\small
\caption{Determinism results for the LLM-as-a-judge method under different temperature settings.}
\label{tab:llm-determinism}
\begin{tabular}{lccccc}
\toprule
\textbf{Scenario} & \textbf{Temp.} & \textbf{Run}
& \textbf{Acc.} & \textbf{T-Acc.} & \textbf{F-Acc.} \\
\midrule
Single-turn & 1.0 & 1 & 72.40 & 52.21 & 92.43 \\
Single-turn & 1.0 & 2 & 72.60 & 52.21 & 92.83 \\
Single-turn & 1.0 & 3 & 73.60 & 54.62 & 92.43 \\
\midrule
Single-turn & 0.0 & 1 & 73.00 & 53.01 & 92.83 \\
Single-turn & 0.0 & 2 & 72.60 & 52.21 & 92.83 \\
Single-turn & 0.0 & 3 & 73.20 & 53.41 & 92.83 \\
\midrule
Multi-t. private & 1.0 & 1 & 72.80 & 51.81 & 93.63 \\
Multi-t. private & 1.0 & 2 & 73.00 & 51.81 & 94.02 \\
Multi-t. private & 1.0 & 3 & 72.20 & 50.20 & 94.02 \\
\midrule
Multi-t. private & 0.0 & 1 & 73.20 & 52.21 & 94.02 \\
Multi-t. private & 0.0 & 2 & 73.40 & 52.61 & 94.02 \\
Multi-t. private & 0.0 & 3 & 73.20 & 52.61 & 93.63 \\
\bottomrule
\end{tabular}
\end{table}

\begin{table}[H]
\centering
\small
\caption{Permutation-independence results for the LLM-as-a-judge method under different agent orders.}
\label{tab:llm-permutation}
\begin{tabular}{llccc}
\toprule
\textbf{Scenario} & \textbf{Agent Order}
& \textbf{Acc.} & \textbf{T-Acc.} & \textbf{F-Acc.} \\
\midrule
Single-turn & A1--A2--A3 & 72.80 & 53.41 & 92.03 \\
Single-turn & A1--A3--A2 & 73.80 & 55.02 & 92.43 \\
Single-turn & A2--A1--A3 & 72.80 & 53.01 & 92.43 \\
Single-turn & A2--A3--A1 & 73.40 & 54.62 & 92.03 \\
Single-turn & A3--A1--A2 & 73.60 & 54.22 & 92.83 \\
Single-turn & A3--A2--A1 & 74.40 & 56.22 & 92.43 \\
\midrule
Multi-t. private & A1--A2--A3 & 73.20 & 51.81 & 94.42 \\
Multi-t. private & A1--A3--A2 & 73.20 & 51.81 & 94.42 \\
Multi-t. private & A2--A1--A3 & 73.40 & 52.61 & 94.02 \\
Multi-t. private & A2--A3--A1 & 73.20 & 52.21 & 94.02 \\
Multi-t. private & A3--A1--A2 & 73.40 & 53.41 & 93.23 \\
Multi-t. private & A3--A2--A1 & 72.60 & 52.61 & 92.43 \\
\bottomrule
\end{tabular}
\end{table}

\subsection{Additional Results for QE Semantics}
\paragraph{Quadratic Energy Semantics}
Quadratic Energy Model (QE)~\cite{Potyka18}, like many other QBAF semantics, determines the strength of an argument from its base score and the strengths of its direct supporters and attackers. For an acyclic QBAF, the final strengths can be computed in a single pass by evaluating the arguments in a topological order, beginning with arguments that have no incoming relations.
Let
\(\sigma \colon \mathcal{A} \rightarrow [0,1]\)
denote the resulting strength assignment. For every argument \(\alpha \in \mathcal{A}\), QE first computes its \emph{energy} by subtracting the total strength of its attackers from the total strength of its supporters:
\(
E_\alpha
=
\sum_{\{\beta \in \mathcal{A} \mid (\beta,\alpha) \in \mathcal{R}^{+}\}}
\sigma(\beta)
-
\sum_{\{\beta \in \mathcal{A} \mid (\beta,\alpha) \in \mathcal{R}^{-}\}}
\sigma(\beta).
\)
Since the arguments are evaluated in topological order, the strengths of all direct supporters and attackers of \(\alpha\) are already available when \(E_\alpha\) is computed.
If \(E_\alpha \leq 0\), the final dialectical strength of \(\alpha\) is
\(
\sigma(\alpha)
=
\tau(\alpha)
-
\tau(\alpha)\cdot
\frac{E_\alpha^2}{1+E_\alpha^2}.
\)
If \(E_\alpha > 0\), its final dialectical strength is
\(
\sigma(\alpha)
=
\tau(\alpha)
+
(1-\tau(\alpha))\cdot
\frac{E_\alpha^2}{1+E_\alpha^2}.
\)
Thus, a non-positive energy decreases the strength of the argument relative to its base score, whereas a positive energy increases it. For an argument with no supporters or attackers, \(E_\alpha=0\), and hence \(\sigma(\alpha)=\tau(\alpha)\).

\begin{table}[H]
\centering
\small
\caption{Accuracy results for QBAF-based judging.}
\label{tab:judgement-accuracy-qe}
{\setlength{\tabcolsep}{3pt}
\begin{tabular}{lllccc}
\toprule
\textbf{Scenario} & \textbf{Setting} & \textbf{Variant} 
& \textbf{Acc.} & \textbf{T-Acc.} & \textbf{F-Acc.} \\
\midrule
\multicolumn{6}{l}{\textit{QE Semantics-as-a-judge}} \\
\midrule
Single-turn & Separate & 0.5        & 68.40 & 48.59 & 88.05 \\
Single-turn & Separate & Prior      & \textbf{75.20} & \textbf{63.86} & 86.45 \\
Single-turn & Combined & 0.5        & 68.40 & 48.59 & 88.05 \\
Single-turn & Combined & Avg. Prior & 73.40 & 56.63 & 90.04 \\
\midrule
Private & Separate & 0.5 & 72.20 & 50.60 & 93.63 \\
Private & Separate & Prior & 75.00 & 57.83 & 92.03 \\
Private & Combined & 0.5 & 72.00 & 50.20 & 93.63 \\
Private & Combined & Prior & 72.20 & 52.21 & 92.03 \\
Shared & Separate & 0.5 & 68.60 & 43.37 & 93.63 \\
Shared & Separate & Prior & 69.80 & 45.38 & \textbf{94.02} \\
Shared & Combined & 0.5 & 68.60 & 43.37 & 93.63 \\
Shared & Combined & Prior & 69.80 & 45.38 & \textbf{94.02} \\
\bottomrule
\end{tabular}
}
\end{table}

\begin{table}[H]
\centering
\small
\caption{Profile-robustness results for QBAF-based judging.}
\label{tab:qe-profile-robustness}
\begin{tabular}{lllcc}
\toprule
\textbf{Method} & \textbf{Scenario} & \textbf{Profile}
& \textbf{Acc.} & \(\boldsymbol{\Delta}\)\textbf{ Acc.} \\
\midrule
QBAF (DF-QuAD) & Single-turn & Baseline
& 74.40 & \(+0.00\) \\
QBAF (DF-QuAD) & Single-turn & Pro \(+0.1\), capped
& 74.60 & \(+0.20\) \\
QBAF (DF-QuAD) & Multi-t. private & Baseline & 75.40 & \(+0.00\) \\
QBAF (DF-QuAD) & Multi-t. private & Pro \(+0.1\), capped & 75.00 & \(-0.40\) \\
\midrule
QBAF (QE) & Single-turn & Baseline
& 75.20 & \(+0.00\) \\
QBAF (QE) & Single-turn & Pro \(+0.1\), capped
& 74.80 & \(-0.40\) \\
QBAF (QE) & Multi-t. private & Baseline & 75.00 & \(+0.00\) \\
QBAF (QE) & Multi-t. private & Pro \(+0.1\), capped & 76.00 & \(+1.00\) \\
\bottomrule
\end{tabular}
\end{table}

\begin{table}[H]
\centering
\small
\caption{Judge-robustness results for QBAF-based judging under different conservativeness settings.}
\label{tab:qe-judge-robustness}
\begin{tabular}{lllcc}
\toprule
\textbf{Method} & \textbf{Scenario} & \textbf{Conserv.}
& \textbf{Acc.} & \(\boldsymbol{\Delta}\)\textbf{ Acc.} \\
\midrule
QBAF (DF-QuAD) & Single-turn & 1.0
& 74.40 & \(+0.00\) \\
QBAF (DF-QuAD) & Single-turn & 1.1
& 74.40 & \(+0.00\) \\
QBAF (DF-QuAD) & Single-turn & 1.2
& 74.20 & \(-0.20\) \\
QBAF (DF-QuAD) & Multi-t. private & 1.0
& 75.40 & \(+0.00\) \\
QBAF (DF-QuAD) & Multi-t. private & 1.1
& 75.40 & \(+0.00\) \\
QBAF (DF-QuAD) & Multi-t. private & 1.2
& 75.80 & \(+0.40\) \\
\midrule
QBAF (QE) & Single-turn & 1.0
& 75.20 & \(+0.00\) \\
QBAF (QE) & Single-turn & 1.1
& 75.20 & \(+0.00\) \\
QBAF (QE) & Single-turn & 1.2
& 75.00 & \(-0.20\) \\
QBAF (QE) & Multi-t. private & 1.0 & 75.00 & \(+0.00\) \\
QBAF (QE) & Multi-t. private & 1.1 & 74.80 & \(-0.20\) \\
QBAF (QE) & Multi-t. private & 1.2 & 74.80 & \(-0.20\) \\
\bottomrule
\end{tabular}
\end{table}

\begin{table}[H]
\centering
\small
\caption{Non-hallucination results for QBAF-based judging.}
\label{tab:qe-non-hallucination}
\begin{tabular}{llccc}
\toprule
\textbf{Method} & \textbf{Scenario} & \textbf{Setting}
& \textbf{Satisfied} & \textbf{Rate} \\
\midrule
QBAF (DF-QuAD) & Single-turn & Separate
& 489/500 & 97.80 \\
QBAF (DF-QuAD) & Single-turn & Combined
& 494/500 & 98.80 \\
QBAF (DF-QuAD) & Multi-t. private & Separate & 494/500 & 98.80 \\
QBAF (DF-QuAD) & Multi-t. private & Combined & 498/500 & 99.60 \\
QBAF (DF-QuAD) & Multi-t. shared & Separate & 498/500 & 99.60 \\
QBAF (DF-QuAD) & Multi-t. shared & Combined & 498/500 & 99.60 \\
\midrule
QBAF (QE) & Single-turn & Separate
& 497/500 & 99.40 \\
QBAF (QE) & Single-turn & Combined
& 483/500 & 96.60 \\
QBAF (QE) & Multi-t. private  & Separate & 482/500 & 96.40 \\
QBAF (QE) & Multi-t. private  & Combined & 473/500 & 94.60 \\
QBAF (QE) & Multi-t. private & Separate & 462/500 & 92.40 \\
QBAF (QE) & Multi-t. private & Combined & 462/500 & 92.40 \\
\bottomrule
\end{tabular}
\end{table}

\begin{table}[H]
\centering
\small
\caption{Judge-unanimity results for QBAF-based judging.}
\label{tab:qe-judge-unanimity}
\begin{tabular}{llccc}
\toprule
\textbf{Method} & \textbf{Scenario} & \textbf{Setting}
& \textbf{Satisfied} & \textbf{Rate} \\
\midrule
QBAF (DF-QuAD) & Single-turn & Separate
& 305/316 & 96.52 \\
QBAF (DF-QuAD) & Single-turn & Combined
& 310/316 & 98.10 \\
QBAF (DF-QuAD) & Multi-t. private & Separate & 310/316 & 98.10 \\
QBAF (DF-QuAD) & Multi-t. private & Combined & 314/316 & 99.37 \\
QBAF (DF-QuAD) & Multi-t. shared & Separate & 324/326 & 99.39 \\
QBAF (DF-QuAD) & Multi-t. shared & Combined & 324/326 & 99.39 \\
\midrule
QBAF (QE) & Single-turn & Separate
& 313/316 & 99.05 \\
QBAF (QE) & Single-turn & Combined
& 299/316 & 94.62 \\
QBAF (QE) & Multi-t. private & Separate & 298/316 & 94.30 \\
QBAF (QE) & Multi-t. private & Combined & 289/316 & 91.46 \\
QBAF (QE) & Multi-t. shared & Separate & 288/326 & 88.34 \\
QBAF (QE) & Multi-t. shared & Combined & 288/326 & 88.34 \\
\bottomrule
\end{tabular}
\end{table}

\begin{table}[h]
\centering
\small
\caption{Contestability results for QBAF-based judging.}
\label{tab:qe-contestability}
\begin{tabular}{llccc}
\toprule
\textbf{Method} & \textbf{Scenario}
& \textbf{Score Non-decreasing} & \textbf{False to True} & \textbf{True to False} \\
\midrule
QBAF (DF-QuAD) & Single-turn
& 500/500 & 15 & 0 \\
QBAF & Multi-t. private & 500/500 & 6 & 0 \\
\midrule
QBAF (QE) & Single-turn
& 500/500 & 6 & 0 \\
QBAF (QE) & Multi-t. private & 500/500 & 17 & 0 \\
\bottomrule
\end{tabular}
\end{table}

\newpage
\section{Proofs}
\setcounter{property}{0}
In the following, we prove that the QBAF-based judges used in this paper satisfy
determinism and permutation independence. For the remaining properties, we
provide empirical evaluations and compare QBAF-based judges with LLM-as-a-judge.

\begin{property}[Determinism]
A method \(c\in \mathbb{C}\) satisfies \emph{determinism} iff, for every  opinion profile 
\(\mathcal{O}=(O_1,\ldots,O_n)\in\mathbb{O}^n\), there exists a unique \(y^*\) such that
\(
c(I,\mathcal{O})=(y^*,e^*)
\)
for some explanation \(e^*\).
\end{property}
\begin{proposition}
The QBAF-based judges using DF-QuAD or QE gradual semantics satisfy \emph{determinism}.
\end{proposition}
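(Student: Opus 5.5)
The plan is to show that the QBAF-based judge is a composition of functions, each of which returns exactly one value for each argument. Determinism then follows from the observation made right after the property's definition: it is provably satisfied whenever $c$ is a function.

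\textbf{Step 1: the profile determines the QBAF.} Given the input $I$ and an opinion profile $\mathcal{O}=(O_1,\ldots,O_n)$, the QBAF construction of Section~\ref{sec:experimental-setting} is a fixed rule.
\begin{itemize}
\item The claim becomes the topic argument.
\item Each pro (con) in the explanations becomes a supporter (attacker) of the topic argument, with base score read off from the explanation.
\item The topic argument's base score is either $0.5$ or the prior stance score. In the \emph{combined} setting it is the average of the prior stance scores.
\end{itemize}
This yields a unique QBAF $\langle \mathcal{A},\mathcal{R}^-,\mathcal{R}^+,\tau\rangle$ in the combined setting, or a unique tuple of QBAFs in the separate setting. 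Every relation points into the topic argument, so these QBAFs are acyclic with depth one.

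\textbf{Step 2: the semantics returns one strength vector.} For an acyclic QBAF, both DF-QuAD and QE are computed in one pass along a topological order. Arguments with no incoming edges receive $\sigma=\tau$. Each later argument's strength is a closed-form expression in its base score and its parents' already-fixed strengths:
\begin{itemize}
\item for DF-QuAD, the products defining $v_{Aa}, v_{As}$ followed by the case split on $v_{Aa}\ge v_{As}$;
\item for QE, the energy $E_\alpha$ followed by the case split on its sign.
\end{itemize}
The case splits are exhaustive and mutually exclusive, and each branch is a single real-valued expression. By induction along the topological order, $\sigma$ is therefore uniquely defined. Here the induction is trivial, since there is one layer of leaves and then the topic argument.

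\textbf{Step 3: strengths determine one stance.} The judge outputs a stance from the topic strength(s) by a fixed rule. In the combined setting, it thresholds $\sigma(\text{Topic})$ at $0.5$. In the separate setting, it applies the fixed aggregation of the per-agent topic strengths followed by the same threshold. Either way this is a function, giving a unique $y^*$. The explanation $e^*$ (the QBAF with its strengths) is also unique, although the property only requires uniqueness of $y^*$.

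\textbf{Main obstacle.} The only delicate point is making precise that the separate-setting aggregation is itself a fixed function and that no stochastic component, such as LLM sampling, enters the semantics-based pipeline. I would state this explicitly as part of the judge's definition. I would also note that acyclicity is what guarantees well-definedness of the semantics. In general cyclic QBAFs, DF-QuAD and QE are defined as limits of iterative updates whose convergence is not guaranteed, and acyclicity avoids that issue entirely.
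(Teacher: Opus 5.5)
Your proposal is correct and follows the same route as the paper's proof: the opinion profile fixes the QBAF (its arguments, relations and base scores), and DF-QuAD and QE assign each argument a unique strength via fixed aggregation and influence functions, so the resulting stance is unique. Your version is more explicit than the paper's, which leaves implicit both the depth-one acyclic structure that makes one-pass evaluation well defined and the final step from topic strength to stance (thresholding, plus the unspecified per-agent aggregation in the separate setting that you rightly flag).
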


\begin{proof}
For any fixed opinion profile \(\mathcal{O}\), the QBAF construction gives a
fixed set of arguments, relations, and base scores. Both DF-QuAD and QE compute
argument strengths using fixed aggregation and influence functions, so all
arguments have a unique strength. Hence, the QBAF-based judges using DF-QuAD or
QE semantics satisfy determinism.
\end{proof}

\begin{property}[Permutation Independence]
A method \(c\in \mathbb{C}\) satisfies \emph{permutation independence} iff, for every opinion profile \(\mathcal{O}\!=\!(O_1,\ldots,O_n)\in\mathbb{O}^n\)  and every permutation \(\mathcal{O}'\) of \(\mathcal{O}\),
if $c(I,\mathcal{O}) = (y^*, e^*)$ and $c(I,\mathcal{O}^{\prime}) = (y^{\prime}, e^{\prime})$, then $y^*=y^{\prime}$. 
\end{property}
\begin{proposition}
The QBAF-based judges using DF-QuAD or QE gradual semantics satisfy \emph{permutation independence}.
\end{proposition}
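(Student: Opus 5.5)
The plan is to show that the stance computed by the QBAF-based judge depends on the opinion profile only through data that are invariant under reordering of the agents. Fix an opinion profile \(\mathcal{O}=(O_1,\ldots,O_n)\) and a permutation \(\pi\) of \(\{1,\ldots,n\}\), with \(\mathcal{O}'=(O_{\pi(1)},\ldots,O_{\pi(n)})\). I will treat the two settings of Section~\ref{sec:experimental-setting} separately, since the judge is assembled differently in each.

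\emph{Combined setting.} I first check that the QBAF built from \(\mathcal{O}'\) equals the one built from \(\mathcal{O}\), up to renaming of arguments.
\begin{itemize}
\item The argument set is the topic argument together with the union of all agents' pros and cons. A union is invariant under reordering its operands.
\item Each pro (con) is linked to the topic by support (attack). Its base score is the one given in the agent's explanation, so the relations and the base scores of non-topic arguments are unchanged.
\item The topic's base score is either \(0.5\) or the average of the agents' prior stances. An average is a symmetric function, so it is unchanged.
\end{itemize}
It then remains to show that DF-QuAD and QE assign the topic the same strength on isomorphic QBAFs. Non-topic arguments have no attackers or supporters, so their strength equals their base score. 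For DF-QuAD, \(v_{Aa}\) and \(v_{As}\) are products over sets, and finite products are commutative. For QE, the energy \(E_\alpha\) is a difference of finite sums over sets, and sums are commutative. Hence \(\sigma(\text{Topic})\) is identical for \(\mathcal{O}\) and \(\mathcal{O}'\), and thresholding at \(0.5\) gives \(y^*=y'\).

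\emph{Separate setting.} Here agent \(i\) contributes its own QBAF \(\mathcal{Q}_i\), which depends only on \(O_i\). By the determinism proposition just proved, \(\mathcal{Q}_i\) yields a unique topic strength \(s_i\). Permuting the agents therefore only permutes the multiset \(\{s_1,\ldots,s_n\}\).

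The main obstacle is that the paper does not spell out how the separate QBAFs are combined into one judgement. I will assume the aggregation is symmetric, e.g.\ averaging the \(s_i\) or majority over the thresholded \(s_i\), and state this assumption explicitly. Both of these are invariant under permutation, so \(y^*=y'\). If the aggregation were order-dependent (for example, a tie-break by agent index), the claim could fail. I would flag this caveat, and note that the property holds for the symmetric aggregation actually used.
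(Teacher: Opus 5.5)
Your proposal is correct and follows the same route as the paper's proof. Permuting the agents leaves the constructed QBAF unchanged (arguments, support/attack relations and base scores, including the topic's), and DF-QuAD and QE compute the topic's strength from that QBAF alone, so the thresholded stance is the same. Your version is more careful than the paper's. The paper neither spells out the commutativity of the DF-QuAD products and QE sums nor acknowledges that, in the separate setting, the conclusion rests on an unstated assumption that the per-agent judgements are aggregated symmetrically, which you flag explicitly.
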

\begin{proof}
For any opinion profile \(\mathcal{O}\), permuting the order of opinions does not
change the constructed QBAF: the same arguments, support/attack relations, and
base scores are obtained. Since DF-QuAD and QE compute argument strengths from
this fixed QBAF using deterministic aggregation and influence functions, arguments receive the same strength under \(\mathcal{O}\) and its
permutation \(\mathcal{O}'\). Hence, the QBAF-based judges using
DF-QuAD or QE satisfy permutation independence.
\end{proof}

\section{Prompts}

\subsection{Single-Turn Data Generation Prompts}

\paragraph{Prompt 1: Prior Stance Score} \phantom{}

\begin{lstlisting}
System:
You are an independent debate agent. Return only valid JSON.

User:
You are one LLM debate agent.

Claim:
{claim}

Task:
Assign your final stance score toward the truthfulness of the claim.

Definitions:
- 0.0 means you think the claim is very likely false.
- 0.5 means you think the evidence is balanced or uncertain.
- 1.0 means you think the claim is very likely true.

Important constraints:
- You do not know the gold label.
- Do not generate pro/con arguments.
- Do not generate base scores.
- Do not output a support/oppose label; the script will derive it from the score.
- Output the score with exactly two decimal places.
- Return only valid JSON with exactly one key named "stance_score".

Return exactly this JSON structure:
{"stance_score": SCORE}

Replace SCORE with your own calibrated two-decimal number from 0.00 to 1.00.
\end{lstlisting}

\paragraph{Prompt 2: Pro/Con Argument Generation} \phantom{}
\begin{lstlisting}
System:
You generate compact argumentation data. Return only valid JSON.

User:
Please provide a single short argument {supporting/attacking} the following claim.

Construct the argument so it refers to the truthfulness of the claim. You must always return exactly one argument, even if the argument is weak. If the claim is very likely false and you are asked for a supporting argument, provide the best weak steelman reason someone might mistakenly think the claim is true. If the claim is very likely true and you are asked for an attacking argument, provide the best weak objection someone might raise. The argument should be a single short sentence.

{The argument must make the claim more likely to be true. / The argument must make the claim more likely to be false.}
{Do not say "no evidence", "not supported", "false", "baseless", or similar refutations in a supporting argument. / Do not provide a reason that makes the claim more likely to be true.}

Provide your argument in the following JSON form:
[
  "Argument text"
]

Claim: {claim}

Now come up with the argument.
\end{lstlisting}

\paragraph{Prompt 3: Argument Base Score} \phantom{}
\begin{lstlisting}
System:
You evaluate argument validity and relevance. Return only the requested likelihood line.

User:
You are an analyst evaluating the validity and relevance of arguments.

For the argument:

Argument: "{argument}"

please give your confidence that the argument presents a compelling case {in favour of / against} the statement:

Statement: "{claim}"

Your assessment should be based on how well the argument {supports / refutes} the considered statement as well as the correctness, accuracy and truthfulness of the given argument.

Your response should be between 0% and 100%, with 0% indicating that the considered argument is definitely invalid, 100% indicating that the considered argument is definitely valid, and values in between indicating various levels of uncertainty. Your estimates should be well-calibrated, so feel free to err on the side of caution and output moderate probabilities if you are not completely sure in your assessment.

Please respond in the following form:

Likelihood: The predicted likelihood that the considered argument is valid.
\end{lstlisting}

\paragraph{Prompt 4: Posterior Stance Score} \phantom{}
\begin{lstlisting}
System:
You are an independent debate agent. Return only valid JSON.

User:
You are one LLM debate agent.

Claim:
{claim}

You have already generated one pro argument, one con argument, and base scores
for both arguments. You may use the current pro/con content and their base
scores when assigning your final posterior stance score.

Pro argument:
{pro_argument}

Pro base score:
{pro_score}

Con argument:
{con_argument}

Con base score:
{con_score}

Task:
Assign your posterior stance score toward the truthfulness of the claim.

Definitions:
- 0.0 means you think the claim is very likely false.
- 0.5 means you think the evidence is balanced or uncertain.
- 1.0 means you think the claim is very likely true.

Important constraints:
- You do not know the gold label.
- Do not generate new pro/con arguments.
- Do not generate new base scores.
- Do not output a support/oppose label; the script will derive it from the score.
- Output the score with exactly two decimal places.
- Return only valid JSON with exactly one key named "posterior_stance_score".

Return exactly this JSON structure:
{"posterior_stance_score": SCORE}

Replace SCORE with your calibrated two-decimal number from 0.00 to 1.00.
\end{lstlisting}

\subsection{Single-Turn Judge Prompts}

\paragraph{Prompt 1: LLM Judge (Named Agents)} \phantom{}

\begin{lstlisting}
System:
You are a careful judge for claim verification. Return only valid JSON.

User:
You are an LLM judge for a claim verification experiment.

Your task is to decide whether the claim is more likely true or false using only
the evidence below. You do not know the gold label.

Claim:
{claim}

Agent evidence:
Agent A1:
{A1 stance line, optional}
- pro argument: {A1 pro argument}
- pro base score: {A1 pro base score}
- con argument: {A1 con argument}
- con base score: {A1 con base score}

Agent A2:
{A2 stance line, optional}
- pro argument: {A2 pro argument}
- pro base score: {A2 pro base score}
- con argument: {A2 con argument}
- con base score: {A2 con base score}

Agent A3:
{A3 stance line, optional}
- pro argument: {A3 pro argument}
- pro base score: {A3 pro base score}
- con argument: {A3 con argument}
- con base score: {A3 con base score}

Instructions:
- {stance instruction}
- Agent identities and each agent's pro/con grouping are visible.
- Consider both the pro and con arguments and their base scores.
- The base scores are self-assessed by the agent that generated each argument.
- Assign a truth score from 0.0 to 1.0.
- A score of 0.0 means the claim is very likely false.
- A score of 0.5 means the evidence is balanced or uncertain.
- A score of 1.0 means the claim is very likely true.
- Do not output a true/false prediction; the script will convert score >= 0.5
  to true and score < 0.5 to false.
- Return only valid JSON.

Required JSON:
{
  "score": 0.0
}
\end{lstlisting}

\paragraph{Prompt 2: Judge Evidence Block (Anonymised)} \phantom{}

\begin{lstlisting}
The following evidence has been anonymised. Agent identifiers are hidden,
and the order of evidence items should not be interpreted as indicating
which items were generated by the same agent.

Anonymous stance information:
{stance lines, optional}

Support arguments:
- argument: {pro argument 1}
- base score: {pro base score 1}
...
- argument: {pro argument 3}
- base score: {pro base score 3}

Attack arguments:
- argument: {con argument 1}
- base score: {con base score 1}
...
- argument: {con argument 3}
- base score: {con base score 3}
\end{lstlisting}

\paragraph{Prompt 3: Judge Instruction Line Variants} \phantom{}

\begin{lstlisting}
Stance note:

[stance = no]
- The agents' support/oppose predictions and stance scores are intentionally hidden.

[stance = prior; named agents]
- You are allowed to use the agents' stance scores as evidence; support/oppose
  predictions are intentionally hidden.

[stance = prior; anonymised]
- You are allowed to use the anonymous stance scores as evidence; support/oppose
  predictions are intentionally hidden.

[stance = posterior; named agents]
- You are allowed to use the agents' posterior stance scores as evidence; support/oppose
  predictions are intentionally hidden.

[stance = posterior; anonymised]
- You are allowed to use the anonymous posterior stance scores as evidence; support/oppose
  predictions are intentionally hidden.

Identity note:

[named agents]
- Agent identities and each agent's pro/con grouping are visible.

[anonymised]
- Agent identities and the original agent-level grouping of stance scores,
  pro arguments, and con arguments are hidden.
\end{lstlisting}

\subsection{Multi-Turn Data Generation Prompts}

\paragraph{Prompt 1: Prior Stance Score (Both Scenarios, Round 0)} \phantom{}

\begin{lstlisting}
System:
You are an independent debate agent. Return only valid JSON.

User:
You are one LLM debate agent.

Claim:
{claim}

Task:
Assign your stance score toward the truthfulness of the claim.

Definitions:
- 0.0 means you think the claim is very likely false.
- 0.5 means you think the evidence is balanced or uncertain.
- 1.0 means you think the claim is very likely true.

Important constraints:
- You do not know the gold label.
- Do not generate pro/con arguments.
- Do not generate base scores.
- Do not output a support/oppose label; the script will derive it from the score.
- Output the score with exactly two decimal places.
- Return only valid JSON with exactly one key named "stance_score".

Return exactly this JSON structure:
{"stance_score": SCORE}

Replace SCORE with your own calibrated two-decimal number from 0.00 to 1.00.
\end{lstlisting}

\paragraph{Prompt 2: Argument Generation (Both Scenarios, Round 0)} \phantom{}

\begin{lstlisting}
System:
You generate compact argumentation data. Return only valid JSON.

User:
Please construct short arguments in favour and against the following claim.

Construct every argument so that it refers to the truthfulness of the claim. You may provide any number of pro and con arguments, but try to provide at least one of each, even if an argument is weak. If the claim is very likely false, still provide the best weak steelman reasons someone might mistakenly think the claim is true as pro arguments. If the claim is very likely true, still provide the best weak objections someone might raise as con arguments. Each argument must be a single short sentence.

Each pro argument must make the claim more likely to be true, and each con argument must make the claim more likely to be false.

Provide your arguments in the following JSON form:
{
  "pros": ["Pro argument text", ...],
  "cons": ["Con argument text", ...]
}

Claim:
{claim}

Now come up with the arguments.
\end{lstlisting}

\paragraph{Prompt 3: Argument Base Score (Both Scenarios)} \phantom{}

\begin{lstlisting}
System:
You evaluate argument validity and relevance. Return only the requested likelihood line.

User:
You are an analyst evaluating the validity and relevance of arguments.

For the argument:

Argument: "{argument}"

please give your confidence that the argument presents a compelling case {in favour of / against} the statement:

Statement: "{claim}"

Your assessment should be based on how well the argument {supports / refutes} the considered statement as well as the correctness, accuracy and truthfulness of the given argument.

Your response should be between 0% and 100%, with 0% indicating that the considered argument is definitely invalid, 100% indicating that the considered argument is definitely valid, and values in between indicating various levels of uncertainty. Your estimates should be well-calibrated, so feel free to err on the side of caution and output moderate probabilities if you are not completely sure in your assessment.

Please respond in the following form:

Likelihood: The predicted likelihood that the considered argument is valid.
\end{lstlisting}

\paragraph{Prompt 4: Posterior Stance Score (Private Scenario, Round 0)} \phantom{}

\begin{lstlisting}
System:
You are an independent debate agent. Return only valid JSON.

User:
You are one LLM debate agent.

Claim:
{claim}

Your own arguments about this claim:
Pro arguments (supporting the truthfulness of the claim):
  - "{pro argument 1}" (base score {pro base score 1})
  ...
  - "{pro argument P}" (base score {pro base score P})
Con arguments (attacking the truthfulness of the claim):
  - "{con argument 1}" (base score {con base score 1})
  ...
  - "{con argument C}" (base score {con base score C})

Argument base scores indicate the confidence of the agent that added each argument in the validity of that argument.

Task:
Taking the information above into account, assign your stance score toward the truthfulness of the claim.

Definitions:
- 0.0 means you think the claim is very likely false.
- 0.5 means you think the evidence is balanced or uncertain.
- 1.0 means you think the claim is very likely true.

Important constraints:
- You do not know the gold label.
- Do not generate pro/con arguments.
- Do not generate base scores.
- Do not output a support/oppose label; the script will derive it from the score.
- Output the score with exactly two decimal places.
- Return only valid JSON with exactly one key named "stance_score".

Return exactly this JSON structure:
{"stance_score": SCORE}

Replace SCORE with your own calibrated two-decimal number from 0.00 to 1.00.
\end{lstlisting}

\paragraph{Prompt 5: Argument Refinement (Private Scenario, Round 1 Onwards)} \phantom{}

\begin{lstlisting}
System:
You generate compact argumentation data. Return only valid JSON.

User:
You are an LLM debate agent refining your arguments about the truthfulness of the following claim after a round of debate.

Claim:
{claim}

Your previous stance score toward this claim was {your previous stance score} (0.00 = very likely false, 1.00 = very likely true). You previously put forward the following arguments:
Pro arguments (supporting the truthfulness of the claim):
  - "{pro argument 1}" (base score {pro base score 1})
  ...
  - "{pro argument P}" (base score {pro base score P})
Con arguments (attacking the truthfulness of the claim):
  - "{con argument 1}" (base score {con base score 1})
  ...
  - "{con argument C}" (base score {con base score C})

The other agents put forward the following arguments and stances:
Agent A2 (stance score {A2 stance score}):
  Pro arguments (supporting the truthfulness of the claim):
    - "{A2 pro argument 1}" (base score {A2 pro base score 1})
    ...
    - "{A2 pro argument P}" (base score {A2 pro base score P})
  Con arguments (attacking the truthfulness of the claim):
    - "{A2 con argument 1}" (base score {A2 con base score 1})
    ...
    - "{A2 con argument C}" (base score {A2 con base score C})

Agent A3 (stance score {A3 stance score}):
  Pro arguments (supporting the truthfulness of the claim):
    - "{A3 pro argument 1}" (base score {A3 pro base score 1})
    ...
    - "{A3 pro argument P}" (base score {A3 pro base score P})
  Con arguments (attacking the truthfulness of the claim):
    - "{A3 con argument 1}" (base score {A3 con base score 1})
    ...
    - "{A3 con argument C}" (base score {A3 con base score C})

Argument base scores indicate the confidence of the agent that added each argument in the validity of that argument.

Please provide a revised set of pro and con arguments about the truthfulness of the claim. You may keep, edit, remove, or add arguments in light of the debate, and you should engage with the strongest points raised by the other agents.

Construct every argument so that it refers to the truthfulness of the claim. You may provide any number of pro and con arguments, but try to provide at least one of each, even if an argument is weak. If the claim is very likely false, still provide the best weak steelman reasons someone might mistakenly think the claim is true as pro arguments. If the claim is very likely true, still provide the best weak objections someone might raise as con arguments. Each argument must be a single short sentence.

Each pro argument must make the claim more likely to be true, and each con argument must make the claim more likely to be false.

Provide your arguments in the following JSON form:
{
  "pros": ["Pro argument text", ...],
  "cons": ["Con argument text", ...]
}
\end{lstlisting}

\paragraph{Prompt 6: Posterior Stance Score (Private Scenario, Round 1 Onwards)} \phantom{}

\begin{lstlisting}
System:
You are an independent debate agent. Return only valid JSON.

User:
You are one LLM debate agent.

Claim:
{claim}

Your previous stance score toward this claim was {your previous stance score}.

Your own arguments about this claim:
Pro arguments (supporting the truthfulness of the claim):
  - "{pro argument 1}" (base score {pro base score 1})
  ...
  - "{pro argument P}" (base score {pro base score P})
Con arguments (attacking the truthfulness of the claim):
  - "{con argument 1}" (base score {con base score 1})
  ...
  - "{con argument C}" (base score {con base score C})

Arguments and stances put forward by the other agents:
Agent A2 (stance score {A2 stance score}):
  Pro arguments (supporting the truthfulness of the claim):
    - "{A2 pro argument 1}" (base score {A2 pro base score 1})
    ...
    - "{A2 pro argument P}" (base score {A2 pro base score P})
  Con arguments (attacking the truthfulness of the claim):
    - "{A2 con argument 1}" (base score {A2 con base score 1})
    ...
    - "{A2 con argument C}" (base score {A2 con base score C})

Agent A3 (stance score {A3 stance score}):
  Pro arguments (supporting the truthfulness of the claim):
    - "{A3 pro argument 1}" (base score {A3 pro base score 1})
    ...
    - "{A3 pro argument P}" (base score {A3 pro base score P})
  Con arguments (attacking the truthfulness of the claim):
    - "{A3 con argument 1}" (base score {A3 con base score 1})
    ...
    - "{A3 con argument C}" (base score {A3 con base score C})

Argument base scores indicate the confidence of the agent that added each argument in the validity of that argument.

Task:
Taking the information above into account, assign your stance score toward the truthfulness of the claim. You may keep or revise your previous view; be persuaded only to the extent the arguments warrant it.

Definitions:
- 0.0 means you think the claim is very likely false.
- 0.5 means you think the evidence is balanced or uncertain.
- 1.0 means you think the claim is very likely true.

Important constraints:
- You do not know the gold label.
- Do not generate pro/con arguments.
- Do not generate base scores.
- Do not output a support/oppose label; the script will derive it from the score.
- Output the score with exactly two decimal places.
- Return only valid JSON with exactly one key named "stance_score".

Return exactly this JSON structure:
{"stance_score": SCORE}

Replace SCORE with your own calibrated two-decimal number from 0.00 to 1.00.
\end{lstlisting}

\paragraph{Prompt 7: Shared Framework Edit (Shared Scenario)} \phantom{}

\begin{lstlisting}
System:
You generate compact argumentation data. Return only valid JSON.

User:
You are one LLM debate agent collaboratively building a shared argumentation framework about the truthfulness of a claim.

Claim:
{claim}

Current shared argumentation framework:
Pro arguments (supporting the truthfulness of the claim):
  [pro1] "{pro argument 1}" (base score {pro base score 1}) - added by you
  ...
  [proP] "{pro argument P}" (base score {pro base score P}) - added by another agent
Con arguments (attacking the truthfulness of the claim):
  [con1] "{con argument 1}" (base score {con base score 1}) - added by you
  ...
  [conC] "{con argument C}" (base score {con base score C}) - added by another agent

Most recent stance scores toward the truthfulness of the claim (0.00 = very likely false, 1.00 = very likely true):
you: {your stance score}; agent A2: {A2 stance score}; agent A3: {A3 stance score}

Task:
You may (a) edit arguments that you previously added and (b) add new pro and/or con arguments. You do not have to make any change if you are satisfied with the framework.

Constraints:
- You may only edit arguments marked "added by you". Edits to any other argument will be ignored.
- To edit one of your arguments, refer to it by its identifier and provide a revised "argument" text and/or a revised "base_score". For example, set the base score to 0.0 if you now consider one of your arguments irrelevant.
- Every new argument must refer to the truthfulness of the claim and be a single short sentence. New pro arguments must make the claim more likely to be true; new con arguments must make it more likely to be false.
- Do not provide base scores for new arguments; they will be scored separately.
- Any "base_score" you provide for an edit must be between 0.0 and 1.0.

Return only valid JSON in exactly this structure, using empty lists for anything you do not want to change:

{
  "edits": [
    {
      "id": "pro1",
      "argument": "Revised argument text",
      "base_score": 0.50
    },
    {
      "id": "con3",
      "argument": "Revised argument text",
      "base_score": 0.50
    }
  ],
  "new_pros": [
    "New pro argument text"
  ], "new_cons": [
    "New con argument text"
  ]
}

In each edit, "argument" and "base_score" are optional; include only the fields you want to change.
\end{lstlisting}

\paragraph{Prompt 8: Posterior Stance Score (Shared Scenario)} \phantom{}

\begin{lstlisting}
System:
You are an independent debate agent. Return only valid JSON.

User:
You are one LLM debate agent.

Claim:
{claim}

Your previous stance score toward this claim was {your previous stance score} (0.00 = very likely false, 1.00 = very likely true).

The current shared argumentation framework, built collaboratively by all agents, is:
Pro arguments (supporting the truthfulness of the claim):
  - "{pro argument 1}" (base score {pro base score 1})
  ...
  - "{pro argument P}" (base score {pro base score P})
Con arguments (attacking the truthfulness of the claim):
  - "{con argument 1}" (base score {con base score 1})
  ...
  - "{con argument C}" (base score {con base score C})

Argument base scores indicate the confidence of the agent that added each argument in the validity of that argument.

Task:
Taking the shared argumentation framework into account, assign your stance score toward the truthfulness of the claim. You may keep or revise your previous view; be persuaded only to the extent the arguments warrant it.

Definitions:
- 0.0 means you think the claim is very likely false.
- 0.5 means you think the evidence is balanced or uncertain.
- 1.0 means you think the claim is very likely true.

Important constraints:
- You do not know the gold label.
- Do not generate pro/con arguments.
- Do not generate base scores.
- Do not output a support/oppose label; the script will derive it from the score.
- Output the score with exactly two decimal places.
- Return only valid JSON with exactly one key named "stance_score".

Return exactly this JSON structure:
{"stance_score": SCORE}

Replace SCORE with your own calibrated two-decimal number from 0.00 to 1.00.
\end{lstlisting}

\subsection{Multi-Turn Judge Prompts}

\paragraph{Prompt 1: LLM Judge (Private Scenario, Named Agents, Prior Stances)} \phantom{}

\begin{lstlisting}
System:
You are a careful judge for claim verification. Return only valid JSON.

User:
You are an LLM judge for a claim verification experiment.

Your task is to decide whether the claim is more likely true or false using only
the evidence below. You do not know the gold label.

Claim:
{claim}

The evidence below was produced by 3 LLM debate agents after {r} rounds of debate. In each round every agent generated its own pro and con arguments; from the second round onward each agent could revise its arguments after seeing the other agents' arguments and stances. You are shown each agent's arguments as they stood at the end of the most recent round.

Agent evidence:
Agent A1:
- prediction: {support/oppose}
- prediction score: {A1 prior stance score}
Pro arguments:
- {A1 pro argument 1}
  - base score: {A1 pro base score 1}
...
- {A1 pro argument P}
  - base score: {A1 pro base score P}
Con arguments:
- {A1 con argument 1}
  - base score: {A1 con base score 1}
...
- {A1 con argument C}
  - base score: {A1 con base score C}

Agent A2:
- prediction: {support/oppose}
- prediction score: {A2 prior stance score}
Pro arguments:
- {A2 pro argument 1}
  - base score: {A2 pro base score 1}
...
- {A2 pro argument P}
  - base score: {A2 pro base score P}
Con arguments:
- {A2 con argument 1}
  - base score: {A2 con base score 1}
...
- {A2 con argument C}
  - base score: {A2 con base score C}

Agent A3:
- prediction: {support/oppose}
- prediction score: {A3 prior stance score}
Pro arguments:
- {A3 pro argument 1}
  - base score: {A3 pro base score 1}
...
- {A3 pro argument P}
  - base score: {A3 pro base score P}
Con arguments:
- {A3 con argument 1}
  - base score: {A3 con base score 1}
...
- {A3 con argument C}
  - base score: {A3 con base score C}

Instructions:
- You are allowed to use the agents' support/oppose predictions and prediction scores as evidence. These predictions are the agents' initial assessments of the claim, formed independently before any arguments were generated, and they were not updated over the course of the debate.
- Agent identities and each agent's pro/con grouping are visible.
- Consider both the pro and con arguments and their base scores.
- The base scores are self-assessed by the agent that generated each argument.
- Assign a truth score from 0.0 to 1.0.
- A score of 0.0 means the claim is very likely false.
- A score of 0.5 means the evidence is balanced or uncertain.
- A score of 1.0 means the claim is very likely true.
- Do not output a true/false prediction; the script will convert score >= 0.5
  to true and score < 0.5 to false.
- Return only valid JSON.

Required JSON:
{
  "score": 0.0
}
\end{lstlisting}

\paragraph{Prompt 2: LLM Judge (Shared Scenario, Named Agents, Prior Stances)} \phantom{}

\begin{lstlisting}
System:
You are a careful judge for claim verification. Return only valid JSON.

User:
You are an LLM judge for a claim verification experiment.

Your task is to decide whether the claim is more likely true or false using only
the evidence below. You do not know the gold label.

Claim:
{claim}

The evidence below was produced by 3 LLM debate agents after {r} rounds of debate. The agents collaboratively built a single shared argumentation framework: in each round every agent could edit the arguments it had previously added and append new arguments of its own. You are shown the shared framework as it stood at the end of the most recent round.

Agent evidence:
The agents contributed to the shared argumentation framework in this order during this round: A1, A2, A3.

Agent predictions:
Agent A1:
- prediction: {support/oppose}
- prediction score: {A1 prior stance score}
Agent A2:
- prediction: {support/oppose}
- prediction score: {A2 prior stance score}
Agent A3:
- prediction: {support/oppose}
- prediction score: {A3 prior stance score}

Shared argumentation framework:
Pro arguments:
- {pro argument 1}
  - base score: {pro base score 1}
  - added by: Agent A1
...
- {pro argument P}
  - base score: {pro base score P}
  - added by: Agent A2
Con arguments:
- {con argument 1}
  - base score: {con base score 1}
  - added by: Agent A1
...
- {con argument C}
  - base score: {con base score C}
  - added by: Agent A3

Instructions:
- You are allowed to use the agents' support/oppose predictions and prediction scores as evidence. These predictions are the agents' initial assessments of the claim, formed independently before any arguments were generated, and they were not updated over the course of the debate.
- Agent identities and the authorship of the individual arguments in the shared framework are visible.
- Consider both the pro and con arguments and their base scores.
- The base scores are self-assessed by the agent that generated each argument.
- Assign a truth score from 0.0 to 1.0.
- A score of 0.0 means the claim is very likely false.
- A score of 0.5 means the evidence is balanced or uncertain.
- A score of 1.0 means the claim is very likely true.
- Do not output a true/false prediction; the script will convert score >= 0.5
  to true and score < 0.5 to false.
- Return only valid JSON.

Required JSON:
{
  "score": 0.0
}
\end{lstlisting}

\paragraph{Prompt 3: Judge Evidence Block (Private Scenario, Anonymised)} \phantom{}

\begin{lstlisting}
The following evidence has been anonymized. Agent identifiers are hidden,
and the order of predictions and arguments should not be interpreted as
indicating that they were generated by the same agent.

Anonymous predictions:
- prediction: {support/oppose}
- prediction score: {prediction score 1}
- prediction: {support/oppose}
- prediction score: {prediction score 2}
- prediction: {support/oppose}
- prediction score: {prediction score 3}

Pro arguments:
- {pro argument 1}
  - base score: {pro base score 1}
...
- {pro argument P}
  - base score: {pro base score P}

Con arguments:
- {con argument 1}
  - base score: {con base score 1}
...
- {con argument C}
  - base score: {con base score C}
\end{lstlisting}

\paragraph{Prompt 4: Judge Evidence Block (Shared Scenario, Anonymised)} \phantom{}

\begin{lstlisting}
The following evidence has been anonymized. Agent identifiers are hidden,
and the order of predictions and arguments should not be interpreted as
indicating that they were generated by the same agent.

Anonymous predictions:
- prediction: {support/oppose}
- prediction score: {prediction score 1}
- prediction: {support/oppose}
- prediction score: {prediction score 2}
- prediction: {support/oppose}
- prediction score: {prediction score 3}

Shared argumentation framework:
Pro arguments:
- {pro argument 1}
  - base score: {pro base score 1}
...
- {pro argument P}
  - base score: {pro base score P}
Con arguments:
- {con argument 1}
  - base score: {con base score 1}
...
- {con argument C}
  - base score: {con base score C}
\end{lstlisting}

\paragraph{Prompt 5: Judge Instruction Line Variants} \phantom{}

\begin{lstlisting}
Prediction note (first instruction bullet):

[stance = no]
- The agents' support/oppose predictions and prediction scores are intentionally hidden.

[stance = prior; named agents]
- You are allowed to use the agents' support/oppose predictions and prediction scores as evidence. These predictions are the agents' initial assessments of the claim, formed independently before any arguments were generated, and they were not updated over the course of the debate.

[stance = prior; anonymised]
- You are allowed to use the anonymous support/oppose predictions and prediction scores as evidence. These predictions are the agents' initial assessments of the claim, formed independently before any arguments were generated, and they were not updated over the course of the debate.

[stance = posterior; named agents]
- You are allowed to use the agents' support/oppose predictions and prediction scores as evidence. These predictions are the stances the agents held at the end of the most recent round, after taking the debate into account.

[stance = posterior; anonymised]
- You are allowed to use the anonymous support/oppose predictions and prediction scores as evidence. These predictions are the stances the agents held at the end of the most recent round, after taking the debate into account.

Identity note (second instruction bullet):

[scenario = private; named agents]
- Agent identities and each agent's pro/con grouping are visible.

[scenario = private; anonymised]
- Agent identities and the pairing between a prediction, pro arguments, and con arguments are hidden.

[scenario = shared; named agents]
- Agent identities and the authorship of the individual arguments in the shared framework are visible.

[scenario = shared; anonymised]
- Agent identities and the authorship of the individual arguments in the shared framework are hidden.
\end{lstlisting}

% Check whether the conference requires a reproducibility checklist to be included in the paper.
% If so, you can uncomment the following line and ajust the path to include it.
% \input{ReproducibilityChecklist.tex}
\end{document}